\documentclass{article}
\usepackage[table,dvipsnames]{xcolor}

\usepackage[accepted]{icml2026}

\usepackage{microtype}
\usepackage{graphicx}
\usepackage{subcaption} 
\usepackage{booktabs}

\usepackage[T1]{fontenc}

\usepackage{hyperref}
\usepackage{url}
\usepackage{xurl}

\usepackage{amsmath}
\usepackage{amssymb}
\usepackage{mathtools}
\usepackage{amsthm}
\usepackage{amsfonts}
\usepackage{bm}

\usepackage{amsmath,amsfonts,bm}

\def\eqref#1{equation~\ref{#1}}

\def\1{\bm{1}}

\DeclareMathAlphabet{\mathsfit}{\encodingdefault}{\sfdefault}{m}{sl}
\SetMathAlphabet{\mathsfit}{bold}{\encodingdefault}{\sfdefault}{bx}{n}

\usepackage{rycolab}
\usepackage{macros}

\usepackage{tabularx}
\usepackage{multirow}
\usepackage{makecell}
\usepackage{siunitx}
\usepackage{xspace}
\usepackage[inline]{enumitem}
\usepackage[multiple]{footmisc}
\usepackage{mleftright}
\usepackage{cuted,tcolorbox}
\usepackage{relsize}
\usepackage{hyphenat}
\usepackage{pifont}
\usepackage{wrapfig}
\usepackage{listings}
\usepackage{fontawesome}

\usepackage{algorithm}

\usepackage{tikz-cd}
\usetikzlibrary{decorations.pathmorphing}

\usepackage{CJKutf8}

\usepackage[capitalize,noabbrev]{cleveref}

\usepackage[]{todonotes}

\definecolor{lincolngreen}{rgb}{0.11, 0.35, 0.02}
\definecolor{c1}{HTML}{E85642}

\newcommand{\methodname}{\text{DCO}}

\definecolor{StrColor}{HTML}{8B2D8B}
\newcommand{\sym}[1]{{\color{StrColor} #1}}
\newcommand{\strx}{{\color{StrColor}\boldsymbol{x}}}
\newcommand{\stry}{{\color{StrColor}\boldsymbol{y}}}
\newcommand{\strz}{{\color{StrColor}\boldsymbol{z}}}
\newcommand{\estr}{{\color{StrColor}\varepsilon}}
\renewcommand{\alphabet}{{\color{StrColor}\Sigma}}
\newcommand{\pf}{\mathbin{\sharp}}

\definecolor{LangOneColor}{HTML}{C62828}
\definecolor{LangTwoColor}{HTML}{1E5CB4}
\newcommand{\langone}{{\color{LangOneColor}L}_1}
\newcommand{\langtwo}{{\color{LangTwoColor}L}_2}

\newcommand{\bxone}{{\color{LangOneColor}\boldsymbol{x}}_{1}}
\newcommand{\bxtwo}{{\color{LangTwoColor}\boldsymbol{x}}_{2}}
\newcommand{\byone}{{\color{LangOneColor}\boldsymbol{y}}_{1}}
\newcommand{\bytwo}{{\color{LangTwoColor}\boldsymbol{y}}_{2}}

\newcommand{\candone}{{\color{LangOneColor} \boldsymbol{c}}_{1}}
\newcommand{\candtwo}{{\color{LangTwoColor} \boldsymbol{c}}_{2}}

\let\origmu\mu
\let\orignu\nu
\let\origpi\pi
\let\origrho\rho
\renewcommand{\mu}{{\color{StrColor}\origmu}}
\renewcommand{\nu}{{\color{StrColor}\orignu}}
\renewcommand{\pi}{{\color{StrColor}\origpi}}
\renewcommand{\rho}{{\color{StrColor}\origrho}}
\newcommand{\monoone}{{\color{LangOneColor}\origmu}_1}
\newcommand{\monotwo}{{\color{LangTwoColor}\origmu}_2}

\newcommand{\transonetwo}{{\color{LangTwoColor}\tau}_{1}^2}
\newcommand{\transtwoone}{{\color{LangOneColor}\tau}_{2}^1}

\newcommand{\strengthone}{{\color{LangOneColor}\strength}_{1}}
\newcommand{\strengthtwo}{{\color{LangTwoColor}\strength}_{2}}

\newcommand{\pone}{{\color{LangOneColor}p}_{1}}
\newcommand{\ptwo}{{\color{LangTwoColor}p}_{2}}

\newcommand{\Zone}{{\color{LangOneColor}Z}_{1}}
\newcommand{\Ztwo}{{\color{LangTwoColor}Z}_{2}}

\newcommand{\logits}{{\color{StrColor}z}_{\btheta}}
\newcommand{\logitsstar}{{\color{StrColor}z}^*}

\renewcommand{\policy}{{\color{StrColor}\origpi}}
\renewcommand{\policyone}{{\color{LangOneColor}\origpi}_{1}}
\renewcommand{\policytwo}{{\color{LangTwoColor}\origpi}_{2}}
\renewcommand{\polstar}{{\color{StrColor}\origpi}^{\star}}
\newcommand{\piannealed}{{\color{StrColor} \origpi}_T}
\renewcommand{\piref}{{\color{StrColor}\origrho}}

\renewcommand{\pitheta}{{\color{StrColor}\origpi}_{\btheta}}
\newcommand{\strings}{{\color{StrColor}\alphabet^*}}
\newcommand{\fdiv}{f}
\newcommand{\id}{\mathbf{id}}
\newcommand{\pcomethod}{\text{PCO}}

\newcommand{\pcoone}{\textcolor{LangOneColor}{\mathcal{P}}_1}
\newcommand{\pcotwo}{\textcolor{LangTwoColor}{\mathcal{P}}_2}
\newcommand{\pco}{\textcolor{StrColor}{\mathcal{P}}}
\newcommand{\dcoone}{\textcolor{LangOneColor}{\mathcal{D}}_1}
\newcommand{\dcotwo}{\textcolor{LangTwoColor}{\mathcal{D}}_2}
\newcommand{\dco}{\textcolor{StrColor}{\mathcal{D}}}

\newcommand{\tempone}{\textcolor{LangOneColor}{T}_1}
\newcommand{\temptwo}{\textcolor{LangTwoColor}{T}_2}

\icmltitlerunning{Post-Training Language Models for Crosslingual Consistency}

\begin{document}

\addtocontents{toc}{\protect\setcounter{tocdepth}{-1}}

\twocolumn[
\icmltitle{Post-Training Language Models for Crosslingual Consistency}

  \icmlsetsymbol{equal}{*}

  \begin{icmlauthorlist}
    \icmlauthor{Tianyu Liu}{equal,1}
    \icmlauthor{Jirui Qi}{equal,2}
    \icmlauthor{Mrinmaya Sachan}{1}
    \icmlauthor{Ryan Cotterell}{1}
    \icmlauthor{Raquel Fern\'{a}ndez}{3}
    \icmlauthor{Arianna Bisazza}{2}
  \end{icmlauthorlist}

  \icmlaffiliation{1}{ETH Zürich}
  \icmlaffiliation{2}{CLCG, University of Groningen}
  \icmlaffiliation{3}{University of Amsterdam}

  \icmlcorrespondingauthor{Tianyu Liu}{tianyu.liu@inf.ethz.ch}
  \icmlcorrespondingauthor{Jirui Qi}{j.qi@rug.nl}
    \icmlcorrespondingauthor{Ryan Cotterell}{ryan.cotterell@inf.ethz.ch}

  \icmlkeywords{Machine Learning, ICML, post-training, cross-lingual}

  \vskip 0.3in
]

\printAffiliationsAndNotice{\textsuperscript{*}Co-first authors}

\begin{abstract}
Language models often respond inconsistently to translation-equivalent prompts across languages, undermining the reliability of multilingual systems.
To quantify this, we give an information-theoretic definition of crosslingual consistency as a divergence bound between a model's response distribution and its round-trip pushforward across languages.
We then introduce penalized consistency optimization (\pcomethod{}), a post-training procedure that couples this divergence with a Kullback--Leibler penalty to a fixed reference language model.
Because direct optimization of \pcomethod{} requires expensive on-policy roll-outs, we propose a tractable surrogate, direct consistency optimization (\methodname{}), which can be optimized off-policy.
Across diverse language models and 26 languages, \methodname{} significantly improves crosslingual consistency, outperforms existing methods, and enables targeted alignment of low-resource languages.

\vspace{.25em}
\hspace*{1.5em}\raisebox{-.15em}{\includegraphics[height=1.0em]{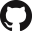}}\hspace{.4em}\parbox[t]{\dimexpr\linewidth-3em\relax}{\raggedright\href{https://github.com/Betswish/ConsistencyRL}{\scriptsize \texttt{https://github.com/Betswish/ConsistencyRL}}}
\end{abstract}

\begin{figure}[!th]
    \centering
    \includegraphics[width=.9\linewidth]{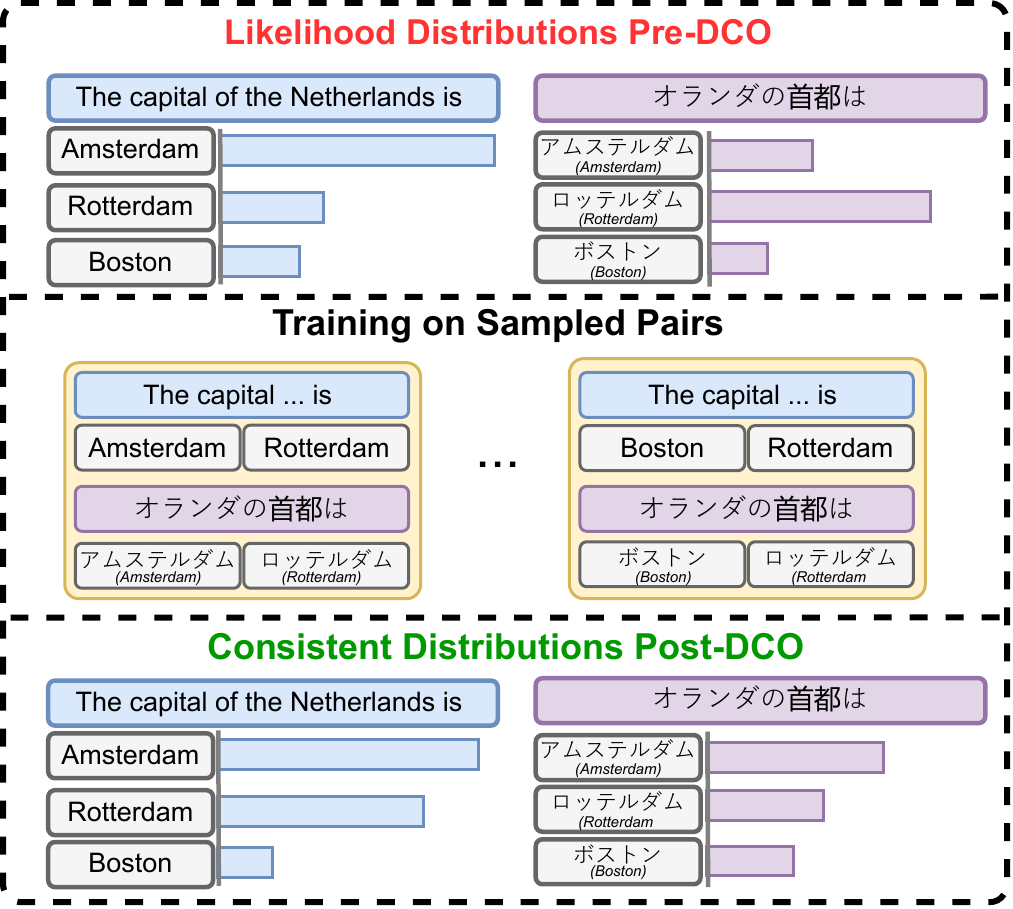}
    \caption{
        Illustration of DCO. 
        DCO promotes crosslingual consistency by aligning the response likelihoods assigned to parallel prompts. Before alignment, these likelihoods induce inconsistent preference rankings across languages. After alignment, the ranking of candidate responses agrees in both languages.
    }\vspace{-15pt}
    \label{fig:motivation}
\end{figure}

\section{Introduction}
As multilingual capabilities become a standard feature of modern language models \citep[LMs;][]{llama,gpt4, deepseek}, aligning model behavior across languages has become increasingly critical.
Ideally, a language model should provide consistent responses to a prompt regardless of the language in which the model is prompted. 
In this paper, we call this desideratum \emph{crosslingual consistency}.
Unfortunately, crosslingual consistency is far from commonplace; indeed,
prior work \citep{jiang-etal-2020-x, qi-etal-2023-cross, wang-etal-2025-calm} has demonstrated that language models often produce conflicting responses across languages, as illustrated in \Cref{fig:motivation} (top).
The lack of crosslingual consistency undermines the reliability of multilingual systems and limits their practical utility for any application that operates across languages.

In this paper, we first give a precise, information-theoretic definition of crosslingual consistency. 
The definition asks that a language model's response distribution in one language be close to its round-trip pushforward into that same language---the distribution obtained by translating the prompt to the other language, sampling a response there, and translating it back. Closeness is measured by an $f$-divergence.
Building on this definition, we derive a new post-training objective---\emph{penalized consistency optimization} (\pcomethod{}).
\pcomethod{} pairs a crosslingual-consistency divergence between a language model's response distributions on translation-equivalent prompts with a Kullback--Leibler penalty that anchors the aligned model to a fixed reference. The two terms trade off crosslingual consistency against drift from the reference.
We characterize the unique optimum of \pcomethod{} in closed form as a weighted geometric mean of the reference language model and its round-trip pushforward.
Under an invertibility assumption, we show this optimum is crosslingually consistent in the sense of our definition.\looseness=-1

Direct optimization of \pcomethod{} is expensive---policy gradient~\citep{reinforce, sutton1999policy} demands fresh on-policy roll-outs at every step and yields high-variance gradient estimates.
To avoid this expense, inspired by direct preference optimization~\citep[DPO;][]{dpo}, we instead propose \emph{direct consistency optimization} (\methodname{}), a logit-matching supervised loss on parallel prompt pairs (\Cref{fig:motivation}, middle).
We show that \methodname{}'s unique minimizer coincides exactly with that of \pcomethod{}---so \methodname{} recovers the same closed-form optimum without the need for on-policy roll-outs.
In practice, \methodname{} requires only offline evaluation on parallel data, making it a practical scheme for achieving crosslingual consistency.

We evaluate the effectiveness of our crosslingual consistency post-training procedure across three datasets, covering 26 languages.
Experimental results demonstrate that \methodname{} significantly improves crosslingual consistency while maintaining, and often improving, response accuracy in the post-trained languages.
In addition, we show that we can control the stability of languages by using an extra strength parameter, which is useful for aligning low-resource languages with dominant ones.
Together, these results indicate that \methodname{} delivers on the consistency desideratum motivated at the outset of this section.

\section{Preliminaries}

\paragraph{Language Models.}
An \defn{alphabet} is a finite, non-empty set, denoted $\alphabet$.
Elements of $\alphabet$ are called \defn{symbols}, written with lowercase Latin letters $\sym{x}, \sym{y}, \sym{z}$.
A finite sequence of symbols is called a \defn{string}, written with bolded lowercase letters $\strx, \stry, \strz$.
We write $\strings$ for the set of all strings over $\alphabet$, including the empty string $\estr$.
A \defn{language model} is a probability distribution over $\strings$, written with lowercase Greek letters such as $\policy$, $\mu$, or $\nu$.
The set of all language models over $\alphabet$ is denoted $\Delta(\strings)$.
A \defn{prompted language model} $\policy \colon \STR \rightsquigarrow \STR$ is a function $\STR \to \Delta(\STR)$ assigning, to each \defn{prompt} $\strx$, a distribution $\policy(\cdot \mid \strx)$ over $\STR$. We use the squiggly arrow $\rightsquigarrow$ throughout to denote a prompted language model.\footnote{Prompted language models can also be viewed as Markov kernels \citep{kallenberg2002foundations} or stochastic maps \citep{baez2014bayesian}.}
The conditional probability $\policy(\stry \mid \strx)$ assigned by a prompted language model is the probability of generating \emph{exactly} the \defn{response} $\stry$.
Given two prompted language models $\pi, \nu \colon \STR \rightsquigarrow \STR$, the \defn{pushforward} of $\nu$ through $\pi$ is the prompted language model $\pi \pf \nu \colon \STR \rightsquigarrow \STR$ defined by
\begin{equation}\label{eq:pushforward-prelim}
    (\pi \pf \nu)(\strz \mid \strx) \defeq \sum_{\stry \in \STR} \pi(\strz \mid \stry) \nu(\stry \mid \strx).
\end{equation}
We treat $\pf$ as a binary operator between two prompted language models, which itself returns a prompted language model.
The pushforward is associative, i.e., $\pi \pf (\nu \pf \mu) = (\pi \pf \nu) \pf \mu$, but not commutative, i.e., $\pi \pf \nu \ne \nu \pf \pi$ in general.
We say a pair of prompted language models $(\mu, \nu)$ is \defn{invertible} if their round-trip is the identity, $\mu \pf \nu = \id$ and $\nu \pf \mu = \id$.

\paragraph{Annealing.}
For a prompted language model $\pi$ and a temperature $T > 0$, we write $\piannealed$ for the \defn{$T$-annealed} prompted language model obtained by raising $\pi(\cdot \mid \strx)$ to the $T^{\text{th}}$ power and renormalizing as follows
\begin{equation}\label{eq:annealing}
    \piannealed(\stry \mid \strx) \defeq \frac{\pi(\stry \mid \strx)^{T}}{\sum_{\stry' \in \alphabet^{*}} \pi(\stry' \mid \strx)^{T}}.
\end{equation}
Throughout, $\piannealed$ refers to this implicitly normalized object, so $\piannealed$ is again a prompted language model.\footnote{For $T \ge 1$, the normalizer is automatically finite, since $\pi(\stry \mid \strx)^{T} \le \pi(\stry \mid \strx)$ pointwise; for $T < 1$, we implicitly assume the sum $\sum_{\stry' \in \alphabet^{*}} \pi(\stry' \mid \strx)^{T}$ converges.}

\paragraph{$\fdiv$-Divergences.}
For a convex function $\fdiv\colon \mathbb{R}_{> 0} \to \mathbb{R}$ with $\fdiv(1) = 0$, the \defn{$\fdiv$-divergence} between two language models $\pi$ and $\nu$ over $\alphabet$ is
\begin{equation}\label{eq:f-divergence}
    D_{\fdiv}(\pi \| \nu) \defeq \E_{\stry \sim \nu}\left[ \fdiv\left( \frac{\pi(\stry)}{\nu(\stry)} \right) \right].
\end{equation}
One canonical instance will recur throughout the paper. The \defn{forward} Kullback--Leibler divergence is the $\fdiv$-divergence with $\fdiv(t) = t \log t$,
\begin{equation}\label{eq:forward-kl}
    D_{\fdiv}(\pi \| \nu) = \E_{\stry \sim \nu}\left[\frac{\pi(\stry)}{\nu(\stry)} \log \frac{\pi(\stry)}{\nu(\stry)}\right] \defeq \kl(\pi \| \nu),
\end{equation}
However, note that the definition of crosslingual consistency in \cref{sec:alignment-consistency} works for any convex $\fdiv$; the \pcomethod{} \emph{objective}, however, uses reverse KL ($\fdiv(t) = -\log t$) specifically, and we verify that its optimum is consistent under any such $\fdiv$ in \cref{prop:backward-kl-consistency}.\looseness=-1

\section{Crosslingual Consistency}\label{sec:alignment-consistency}

\subsection{The Basic Objects}
We assume a single universal alphabet $\alphabet$, and identify each language with a subset of $\STR$: $\langone \subseteq \STR$ for language one, marked in red, and $\langtwo \subseteq \STR$ for language two, marked in blue. We assume access to the following distributions over $\alphabet$, all sharing the same alphabet:
\begin{itemize}
[noitemsep,nosep,leftmargin=*]
    \item a \emph{reference} LM $\piref \in \Delta(\STR)$; we will also use the prompted reference LM $\piref \colon \STR \rightsquigarrow \STR$ induced by $\piref$;
    \item response-generating prompted LMs $\policyone$ and $\policytwo$ over $\langone$ and $\langtwo$, respectively;
    \item \emph{prompt priors} $\monoone \in \Delta(\langone)$ and $\monotwo \in \Delta(\langtwo)$, the distributions from which prompts in each language are drawn;
    \item two prompted LMs $\transonetwo \colon \langone \rightsquigarrow \langtwo$ and $\transtwoone \colon \langtwo \rightsquigarrow  \langone$, called \emph{translators}.
\end{itemize}

With this notation, the push-forward $\transonetwo \pf \policyone \in \Delta(\langtwo)$ is the distribution over $\langtwo$ obtained by sampling a string in $\langone$ from $\policyone$ and translating it via $\transonetwo$, i.e., 
\begin{equation}\label{eq:def-pushforward}
    (\transonetwo \pf \policyone) (\bytwo \mid \bxone) = \sum_{\byone \in \langone} \transonetwo(\bytwo \mid \byone)  \policyone(\byone \mid \bxone).
\end{equation}
The conditional distribution is defined by a \defn{round-trip} $\transonetwo \pf \policyone \pf \transtwoone$, corresponding to a stochastic process that first translates a prompt $\bxtwo$ to $\bxone$, generates the response $\byone$, then translates it back to $\bytwo$.
\begin{align}\label{eq:def-round-trip}
    (\transonetwo & \pf \policyone \pf \transtwoone) (\bytwo \mid \bxtwo) = \\
    & \sum_{\bxone, \byone \in \langone} \transonetwo(\bytwo \mid \byone) \policyone(\byone \mid \bxone) \transtwoone(\bxone \mid \bxtwo). \nonumber
\end{align}
Both \cref{eq:def-pushforward} and \cref{eq:def-round-trip} are valid probability distributions.

\subsection{Crosslingual Consistency}
We now introduce a definition for crosslingual consistency. 
Algebraically (\cref{fig:consistency-diagram-section3}), the property we want to formalize is that, starting from a prompt $\bxone \in \langone$, the round-trip, translate to $\bxtwo$ via $\transonetwo$, sample a response under $\policytwo(\cdot \mid \bxtwo)$, and back-translate via $\transtwoone$, yields the same distribution over $\langone$ as direct sampling under $\policyone(\cdot \mid \bxone)$. Symmetrically, the same property holds for prompts $\bxtwo \in \langtwo$. \cref{def:crosslingual-consistency} below relaxes this exact commutativity to an $\varepsilon$-tolerant $D_f$-divergence bound.
\begin{definition}[Crosslingual Consistency]\label{def:crosslingual-consistency}
A pair of language models $(\policyone, \policytwo)$ is \defn{$(f, \varepsilon)$-crosslingually consistent} under translators $(\transonetwo, \transtwoone)$ for prompt $\bxone \in \langone$ if there exist temperatures $\tempone, \temptwo > 0$ such that
\begin{equation}\label{eq:f-crosslingual-consistency-1}
D_f\bigl(\policyone(\cdot \mid \bxone) \big\| (\transtwoone \pf \policytwo \pf \transonetwo)^{\tempone} (\cdot \mid \bxone) \bigr) \le \varepsilon,
\end{equation}
and, likewise, for prompt $\bxtwo \in \langtwo$ if
\begin{equation}\label{eq:f-crosslingual-consistency-2}
D_f\bigl(\policytwo(\cdot \mid \bxtwo) \big\| (\transonetwo \pf \policyone \pf \transtwoone)^{\temptwo} (\cdot \mid \bxtwo) \bigr) \le \varepsilon.
\end{equation}
\end{definition}

The existential quantification over $\tempone, \temptwo$ absorbs a natural mismatch in sharpness.
In particular, the round-trip $\transtwoone \pf \policytwo \pf \transonetwo$ composes three stochastic operations and is generally more diffuse than direct sampling under $\policyone$, so we ask only that \emph{some} per-language re-tempering brings the round-trip within $\varepsilon$ of direct sampling rather than demanding pointwise agreement.
The temperatures of $\transonetwo$ and $\transtwoone$ are not free parameters of the definition---as we will see in \cref{prop:backward-kl-consistency}, there is a natural choice in some cases.

\begin{figure}\label{fig:translator-diagram}
\centering
\begin{tikzcd}[column sep=huge, row sep=large]
    \bxone \arrow[r, squiggly, "\transonetwo"] \arrow[d, squiggly, "\policyone"'] & \bxtwo \arrow[d, squiggly, "\policytwo"] \\
    \byone & \bytwo \arrow[l, squiggly, "\transtwoone"']
\end{tikzcd}
\caption{Let $\policyone$ be a language model over $\langone$, and $\policytwo$ be a language model over $\langtwo$ and $ \transonetwo \colon  \langone \rightsquigarrow \langtwo$ and $\transtwoone \colon \langtwo \rightsquigarrow \langone$ be a pair of translators.
The above commutativity diagram shows the case when $\transonetwo \pf \policyone \pf \transtwoone = \policytwo$ and $\transtwoone \pf \policytwo \pf \transonetwo = \policyone$, i.e., when we have $(f, 0)$-consistency for all prompts $\bxone \in \langone$ and $\bxtwo \in \langtwo$.
For $\varepsilon > 0$, \cref{def:crosslingual-consistency} is a relaxation of commutativity.
}\vspace{-20pt}
\label{fig:consistency-diagram-section3}
\end{figure}

In practice, $\policyone$ and $\policytwo$ are realized by a single LM $\policy$, conditioned on two language-specific prompts that elicit responses in $\langone$ and $\langtwo$, respectively.
Thus, crosslingual consistency reduces to an internal coherence property of $\policy$.\looseness=-1

\subsection{Penalized Consistency Optimization} \label{sec:pco}
Let $\pitheta$ denote the trainable language model parametrized by $\btheta$, optimized against a fixed reference $\piref$ (typically the initial checkpoint of $\pitheta$). 
We also introduce two per-language strength parameters $\strengthone, \strengthtwo > 0$ that weight each language's round-trip consistency reward against its unit-weight fidelity (KL) term; we revisit their interpretation in \cref{prop:backward-kl-optimum}.
We anchor the fidelity term on the same monolingual prompts as the consistency terms, so the \pcomethod{} objective decomposes into two language-specific halves:
\begin{subequations}
\begin{align}\label{eq:backward-kl-objective}
    \pcoone (\btheta, \bxone) \defeq & \underbrace{\kl\bigl(\pitheta(\cdot \mid \bxone) \big\| \piref(\cdot \mid \bxone)\bigr)}_{\text{fidelity}}  - \\
    & \qquad\qquad \underbrace{\strengthone \log (\transtwoone \pf \piref \pf \transonetwo)(\cdot \mid \bxone)}_{\text{reward}},  \nonumber  \\
    \pcotwo (\btheta, \bxtwo) \defeq  & \underbrace{\kl\bigl(\pitheta(\cdot \mid \bxtwo) \big\| \piref(\cdot \mid \bxtwo)\bigr)}_{\text{fidelity}} - \\
    & \qquad\qquad \underbrace{\strengthtwo \log (\transonetwo \pf \piref \pf \transtwoone)(\cdot \mid \bxtwo)}_{\text{reward}}.  \nonumber 
\end{align}
\end{subequations}
and the full training objective is then given by
\begin{equation}\label{eq:backward-kl-objective-full}
     \pco(\btheta) \defeq \E_{\bxone \sim \monoone}\bigl[\pcoone(\btheta, \bxone)\bigr] + \E_{\bxtwo \sim \monotwo}\bigl[\pcotwo(\btheta, \bxtwo)\bigr],
\end{equation}
with prompts drawn from the priors $\monoone, \monotwo$ introduced in \cref{sec:alignment-consistency}.

We now give a characterization of the minimizer under $\pcomethod{}$.
\begin{restatable}{proposition}{backwardkloptimum}\label{prop:backward-kl-optimum}
Assume $\monoone$ and $\monotwo$ are supported on $\langone$ and $\langtwo$, respectively, with $\langone \cap \langtwo = \emptyset$. The unique minimizer of $\pco$ in \cref{eq:backward-kl-objective} is
\begin{equation}\label{eq:backward-kl-optimum}
\polstar(\cdot \mid \strx) \propto \begin{cases}
    (\transtwoone \pf \piref \pf \transonetwo)^{\strengthone}(\cdot \mid \strx)\piref(\cdot \mid \strx), & \strx \in \langone, \\
    (\transonetwo \pf \piref \pf \transtwoone)^{\strengthtwo}(\cdot \mid \strx)\piref(\cdot \mid \strx), & \strx \in \langtwo, \\
    \text{any element of } \Delta(\STR), & \text{otherwise}.
\end{cases}
\end{equation}
\end{restatable}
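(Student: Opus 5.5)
The plan is to read the ``reward'' term in \cref{eq:backward-kl-objective} as an expectation under the trained model, $\E_{\stry \sim \pitheta(\cdot\mid\bxone)}\bigl[\strengthone \log (\transtwoone \pf \piref \pf \transonetwo)(\stry \mid \bxone)\bigr]$, and similarly for $\pcotwo$. I will also treat $\pitheta$ as ranging over all prompted language models, i.e.\ assume the parametrization is rich enough that every kernel $\STR \rightsquigarrow \STR$ is realized by some $\btheta$. The key structural observation is that the round-trip inside the reward is built from the \emph{fixed} reference $\piref$, not from $\pitheta$. Each half is therefore a linear functional of $\pitheta(\cdot\mid\strx)$ plus a KL term, which is exactly the Gibbs variational form familiar from KL-regularized RL.

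\textbf{Step 1 (decoupling).} Since $\langone \cap \langtwo = \emptyset$, and $\monoone$, $\monotwo$ are supported on $\langone$, $\langtwo$ respectively, the objective $\pco(\btheta)$ is a sum, weighted by prompt probabilities, of terms each depending only on the single conditional $\pitheta(\cdot\mid\strx)$. No conditional is shared between the two halves. Minimizing $\pco$ thus reduces to minimizing $\pcoone(\cdot,\bxone)$ separately for each $\bxone \in \langone$, and $\pcotwo(\cdot,\bxtwo)$ separately for each $\bxtwo \in \langtwo$. Prompts outside $\langone \cup \langtwo$ do not appear at all, which yields the ``any element of $\Delta(\STR)$'' case. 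I would remark that, strictly, prompts in $\langone$ with $\monoone(\bxone)=0$ are also unconstrained, so uniqueness is understood $\monoone$-/$\monotwo$-almost surely.

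\textbf{Step 2 (Gibbs identity).} Fix $\bxone$ and write $R(\stry) \defeq (\transtwoone \pf \piref \pf \transonetwo)(\stry\mid\bxone)$. Define $Z \defeq \sum_{\stry} R(\stry)^{\strengthone}\piref(\stry\mid\bxone)$ and $\polstar(\stry\mid\bxone) \defeq R(\stry)^{\strengthone}\piref(\stry\mid\bxone)/Z$. Combining the logs inside one expectation gives
\begin{equation*}
\pcoone(\btheta,\bxone) = \E_{\stry\sim\pitheta(\cdot\mid\bxone)}\Bigl[\log \tfrac{\pitheta(\stry\mid\bxone)}{R(\stry)^{\strengthone}\piref(\stry\mid\bxone)}\Bigr] = \kl\bigl(\pitheta(\cdot\mid\bxone)\,\|\,\polstar(\cdot\mid\bxone)\bigr) - \log Z .
\end{equation*}
By Gibbs' inequality, the right-hand side is minimized exactly when $\pitheta(\cdot\mid\bxone)=\polstar(\cdot\mid\bxone)$, and this minimizer is unique because KL vanishes only at equality. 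The $\langtwo$ half is handled verbatim, with $\strengthtwo$ and the round-trip $\transonetwo \pf \piref \pf \transtwoone$ in place of their $\langone$ counterparts.

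\textbf{Main obstacle: well-posedness of $Z$.} The step needing care is showing $0<Z<\infty$, so that $\polstar$ is a genuine distribution and the identity above is not of the form $\infty-\infty$.
\begin{itemize}[noitemsep,nosep,leftmargin=*]
\item \emph{Positivity.} $Z>0$ requires $R$ and $\piref(\cdot\mid\bxone)$ to share some support; I would assume this, or note it holds whenever $\piref$ has full support.
\item \emph{Finiteness.} For $\strengthone\le 1$, Hölder's inequality gives $Z \le \bigl(\sum R\bigr)^{\strengthone}\bigl(\sum\piref\bigr)^{1-\strengthone}=1$. For $\strengthone>1$, $R\le 1$ gives $R^{\strengthone}\le R$, so $Z \le \sum_{\stry} \piref(\stry\mid\bxone) = 1$.
\end{itemize}
So $Z$ is finite in all cases, in the spirit of the normalizability footnote on annealing. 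I would also restrict attention to $\pitheta$ with finite objective, i.e.\ $\pitheta(\cdot\mid\bxone)\ll\polstar(\cdot\mid\bxone)$; this is harmless, since any other choice makes $\pcoone=+\infty$.
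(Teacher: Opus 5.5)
Your proposal is correct and follows essentially the same route as the paper: decouple $\pco$ prompt by prompt using the disjoint supports, rewrite each half as $\kl\bigl(\pitheta(\cdot \mid \strx) \,\|\, \polstar(\cdot \mid \strx)\bigr) - \log Z$, and conclude by Gibbs' inequality. Your extra points fill in details the paper leaves implicit: that $0 < Z < \infty$, that the parametrization is rich enough, and that uniqueness holds only on the support of the prompt priors. Finiteness is immediate for every $\strengthone > 0$ because your $R$ satisfies $R^{\strengthone} \le 1$, so the H\"older case split is unnecessary.
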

See \cref{app:proof-prop-backward-kl} for the proof.
Next, we show that \pcomethod{}'s minimizer is crosslingually consistent.
\begin{restatable}{proposition}{backwardklconsistency}\label{prop:backward-kl-consistency}
Assume the exact balance condition $\strengthone\strengthtwo = 1$.
Further assume the translators $(\transonetwo, \transtwoone)$ are invertible, i.e., $\transonetwo \pf \transtwoone = \id$ and $\transtwoone \pf \transonetwo = \id$.
Then the optimum $\polstar$ from \cref{prop:backward-kl-optimum} satisfies
\begin{align*}
    D_f \bigl({\polstar} (\cdot \mid \bxone) \| (\transtwoone \pf \polstar \pf \transonetwo)(\cdot \mid \bxone)^{\strengthone} \bigr) &= 0
\quad \text{and} \\
    D_f \bigl({\polstar} (\cdot \mid \bxtwo) \| (\transonetwo \pf \polstar \pf \transtwoone)(\cdot \mid \bxtwo)^{\strengthtwo} \bigr) &= 0,
\end{align*}
i.e., $(f, 0)$-crosslingual consistency at $(\bxone, \bxtwo)$ in the sense of \cref{def:crosslingual-consistency}. 
\end{restatable}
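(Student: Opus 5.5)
The plan is to reduce invertibility of the translators to a statement about deterministic bijections. Once the translators are bijections, the round-trip pushforward becomes a relabelling of strings. Annealing commutes with relabelling, and the closed form from \cref{prop:backward-kl-optimum} can then be checked directly. The conclusion then follows because $D_f(p \| p) = \E_{p}[f(1)] = f(1) = 0$.

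\emph{Step 1 (invertible translators are bijections).} Suppose $\transtwoone \pf \transonetwo = \id$. Then for each $\bxone$,
\[
\sum_{\bxtwo} \transtwoone(\cdot \mid \bxtwo)\,\transonetwo(\bxtwo \mid \bxone) = \delta_{\bxone}.
\]
This is a mixture of probability distributions equal to a point mass. Hence every component with positive weight must itself equal $\delta_{\bxone}$. So for every $\bxtwo$ in the support of $\transonetwo(\cdot \mid \bxone)$ we have $\transtwoone(\cdot \mid \bxtwo) = \delta_{\bxone}$. Applying the symmetric argument to $\transonetwo \pf \transtwoone = \id$ forces $\transonetwo(\cdot \mid \bxone)$ to be a point mass, say at $\sigma(\bxone)$. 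This yields a bijection $\sigma \colon \langone \to \langtwo$ with $\transtwoone = \delta_{\sigma^{-1}(\cdot)}$. I expect this step to be the main obstacle. The reason is that annealing (raising to a power and renormalizing) does not commute with a genuinely stochastic pushforward, so the algebraic identities alone are not enough and we need the deterministic structure.

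\emph{Step 2 (round-trips as relabellings).} With $\sigma$ in hand, the round-trip of any prompted model $\pi$ becomes a relabelling:
\[
(\transtwoone \pf \pi \pf \transonetwo)(\byone \mid \bxone) = \pi(\sigma(\byone) \mid \sigma(\bxone)),
\]
and symmetrically on $\langtwo$ with $\sigma^{-1}$. Substituting into \cref{eq:backward-kl-optimum} gives the following two expressions:
\[
\polstar(\byone \mid \bxone) \propto \piref(\sigma\byone \mid \sigma\bxone)^{\strengthone}\,\piref(\byone \mid \bxone),
\]
\[
\polstar(\bytwo \mid \bxtwo) \propto \piref(\sigma^{-1}\bytwo \mid \sigma^{-1}\bxtwo)^{\strengthtwo}\,\piref(\bytwo \mid \bxtwo).
\]

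\emph{Step 3 (verification).} The round-trip of $\polstar$ at $\bxone$ is $\polstar(\sigma\byone \mid \sigma\bxone)$. Up to a constant independent of $\byone$, this equals
\[
\piref(\byone \mid \bxone)^{\strengthtwo}\,\piref(\sigma\byone \mid \sigma\bxone).
\]
Annealing at temperature $\strengthone$ raises this to the power $\strengthone$ and renormalizes. The multiplicative constant disappears, and since $\strengthone\strengthtwo = 1$ we get
\[
\piref(\byone \mid \bxone)\,\piref(\sigma\byone \mid \sigma\bxone)^{\strengthone} \propto \polstar(\byone \mid \bxone).
\]
Both sides are normalized distributions, so they are equal. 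The case of $\bxtwo$ is identical with the roles of the two languages and of $\sigma$ and $\sigma^{-1}$ swapped, again using $\strengthtwo\strengthone = 1$.

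Two minor points need to be noted in passing. First, we assume the annealing normalizer converges, as in the footnote to \cref{eq:annealing}. Second, strings with $\piref$-probability zero receive zero mass on both sides, so the ratio inside $D_f$ is well defined on the support.
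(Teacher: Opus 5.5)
Your proposal is correct and follows the same route as the paper's proof. You reduce invertibility to a bijection $\sigma$, write both branches of $\polstar$ as tilts of $\piref$ along $\sigma$, substitute $\bytwo = \sigma(\byone)$ and $\bxtwo = \sigma(\bxone)$, raise to the power $\strengthone$, and conclude via $\strengthone\strengthtwo = 1$ and normalization. Your Step 1 also supplies a justification the paper omits: the paper only asserts that invertible stochastic translators are deterministic bijections, whereas your point-mass mixture argument proves it.
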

See \cref{app:proof-prop-backward-kl-consistency} for the proof.

The per-language strength parameters $\strengthone$ and $\strengthtwo$ in \cref{eq:backward-kl-optimum} interpolate between the mode of the round-trip target (as $\strengthi \to \infty$, where the round-trip target concentrates on its argmax) and $\piref$ (as $\strengthi \to 0$, where $\polstar \to \piref$).
The balance condition $\strengthone\strengthtwo = 1$ aligns the two language-specific strength parameters and, together with invertible translators, makes $\polstar$ exactly crosslingually consistent (\cref{prop:backward-kl-consistency}). Setting $\strengthone = \strengthtwo = 1$ is the canonical symmetric choice; asymmetric choices satisfying $\strengthone\strengthtwo = 1$, e.g., $\strengthone = 2$, $\strengthtwo = 1/2$, bias the optimum toward $\piref$ in the language with the smaller strength parameter, a knob we exploit in \cref{sec:direction} to target low-resource languages without degrading the high-resource side.\looseness=-1

\subsection{Direct Consistency Optimization}
Direct minimization of \pcomethod{} is computationally expensive---the crosslingual consistency term in \cref{eq:backward-kl-objective} is an expectation under $\pitheta$, so estimating its gradient demands on-policy roll-outs at every step. Following \citet{dpo}, we sidestep this by exploiting the fact that \cref{prop:backward-kl-optimum} already characterizes the minimizer $\polstar$ in closed form, and thus opt to fit $\pitheta$ to $\polstar$ as a supervised regression problem.
Taking logs in \cref{eq:backward-kl-optimum} yields
\begin{subequations}\label{eq:log-polstar}
\begin{align}
    \log\polstar(\byone \mid \bxone) &= \strengthone \log(\transtwoone \pf \piref \pf \transonetwo)(\byone \mid \bxone)  \\
    &\quad + \log\piref(\byone \mid \bxone) - \log \Zone(\bxone), \label{eq:log-polstar-one} \nonumber \\
    \log\polstar(\bytwo \mid \bxtwo) &= \strengthtwo \log(\transonetwo \pf \piref \pf \transtwoone)(\bytwo \mid \bxtwo) \\
    &\quad + \log\piref(\bytwo \mid \bxtwo) - \log \Ztwo(\bxtwo). \nonumber \label{eq:log-polstar-two}
\end{align}
\end{subequations}
Here, $\log \Zone(\bxone)$ and $\log \Ztwo(\bxtwo)$ are response-\emph{independent} log-normalizers, so the two right-hand sides specify $\log\polstar(\cdot \mid \bxone)$ and $\log\polstar(\cdot \mid \bxtwo)$ pointwise up to those prompt-specific shifts.

To derive \methodname{}, we parameterize the language model as
\begin{equation}
    \pitheta(\stry \mid \strx) = \frac{\exp(\logits(\stry \mid \strx))}{\sum_{\stry' \in \alphabet^*} \exp(\logits(\stry' \mid \strx))}.
\end{equation}
We regress the logits $\logits$ against the \emph{unnormalized} log-targets $\strengthone \log(\transtwoone \pf \piref \pf \transonetwo) + \log\piref$ and $\strengthtwo \log(\transonetwo \pf \piref \pf \transtwoone) + \log\piref$ under a norm $\|\cdot\|$. 
We define the following sub-objectives
\begin{subequations}\label{eq:dco}
\begin{align}
    \dcoone&(\btheta, \bxone) \defeq \sum_{\byone \in \langone} \Bigl\| \Bigl(\logits(\byone \mid \bxone)  \\
    & - \strengthone \log(\transtwoone \pf \piref \pf \transonetwo)(\byone \mid \bxone) - \log\piref(\byone \mid \bxone)\Bigr) \Bigr\|, \label{eq:dco-one}\nonumber \\
    \dcotwo&(\btheta, \bxtwo) \defeq \sum_{\bytwo \in \langtwo} \Bigl\| \Bigl(\logits(\bytwo \mid \bxtwo) \\
    & - \strengthtwo \log(\transonetwo \pf \piref \pf \transtwoone)(\bytwo \mid \bxtwo) - \log\piref(\bytwo \mid \bxtwo)\Bigr) \Bigr\|. \label{eq:dco-two}\nonumber 
\end{align}
\end{subequations}

Under the invertible-translator assumption of \cref{prop:backward-kl-consistency} ($\transonetwo \pf \transtwoone = \id$ and $\transtwoone \pf \transonetwo = \id$), the translators are deterministic bijections, so each round-trip in \cref{eq:def-round-trip} reduces to a single evaluation of $\piref$ on translated input--output pairs, and \cref{eq:dco} can be computed efficiently in closed form.
In the general case, one can fall back to Monte Carlo estimation, i.e., given a prompt $\bxone$, sample its translation $\bxtwo$ from $\transonetwo$, and sample the response $\bytwo$ from $\piref(\cdot \mid \bxtwo)$. %
The full \methodname{} objective is then given by
\begin{equation}\label{eq:dco-full}
    \dco(\btheta) \defeq \E_{\bxone \sim \monoone}\bigl[\dcoone(\btheta, \bxone)\bigr] + \E_{\bxtwo \sim \monotwo}\bigl[\dcotwo(\btheta, \bxtwo)\bigr].
\end{equation}

Because we fit the logits directly, we avoid having to compute $\log \Zone(\bxone)$ and $\log \Ztwo(\bxtwo)$, both of which involve an infinite sum.
\cref{prop:dco-optimum} below makes this precise.

\begin{restatable}{proposition}{dcooptimum}\label{prop:dco-optimum}
Assume $\monoone$ and $\monotwo$ are supported on $\langone$ and $\langtwo$, respectively, with $\langone \cap \langtwo = \emptyset$. The minimizer $\logitsstar$ of $\dco$ (\Cref{eq:dco-full}), unique up to a prompt-dependent additive constant, is given by
\begin{equation}\label{eq:dco-logit-optimum}
    \logitsstar(\stry \mid \strx) = \log\polstar(\stry \mid \strx),
\end{equation}
with $\polstar$ defined in \cref{eq:backward-kl-optimum}.
\end{restatable}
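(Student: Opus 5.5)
The argument follows the structure of $\dco$. The objective is a sum of nonnegative terms, and each term depends on a single logit $\logits(\stry \mid \strx)$. So we minimize pointwise and then read off the induced distribution.

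\textbf{Step 1: decouple across prompts.} Since $\langone \cap \langtwo = \emptyset$, and $\monoone, \monotwo$ are supported on $\langone$ and $\langtwo$ respectively, every prompt $\strx$ in the support of the mixture falls into exactly one of the two expectations in \cref{eq:dco-full}. We treat the logit function $\logits(\cdot \mid \strx)$ as a free variable for each prompt, i.e.\ a nonparametric, sufficiently expressive family, as is implicit in the statement and in the analogous \cref{prop:backward-kl-optimum}. Then $\dco$ is a nonnegative combination of the per-prompt quantities $\dcoone(\btheta,\bxone)$ and $\dcotwo(\btheta,\bxtwo)$. It is therefore minimized exactly when each of these is minimized for every prompt of positive prior mass. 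Prompts outside the support are unconstrained, matching the ``any element'' clause of \cref{eq:backward-kl-optimum}.

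\textbf{Step 2: pointwise minimization.} Fix $\bxone \in \langone$. Every summand in $\dcoone$ is a norm, hence nonnegative, and it vanishes if and only if its argument is zero. The global minimum value $0$ is therefore attained by setting, for every $\byone$,
\[
\logitsstar(\byone \mid \bxone) = \strengthone \log(\transtwoone \pf \piref \pf \transonetwo)(\byone \mid \bxone) + \log \piref(\byone \mid \bxone).
\]
The case $\bxtwo \in \langtwo$ is symmetric. Comparing with \cref{eq:log-polstar}, this choice equals $\log \polstar(\cdot \mid \strx) + \log \Zone(\bxone)$, respectively $\log \polstar(\cdot \mid \strx) + \log \Ztwo(\bxtwo)$. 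It thus differs from $\log \polstar$ only by a response-independent shift. Because the softmax parameterization is invariant under prompt-dependent additive constants, the induced model is $\pitheta = \polstar$. This gives \cref{eq:dco-logit-optimum} in the stated ``up to a constant'' sense.

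\textbf{Step 3: uniqueness modulo constants, the main obstacle.} Read literally, the regression targets pin the logits down exactly, so strictly the minimizer is unique outright. The point needing care is to reconcile this with the claimed uniqueness ``up to a prompt-dependent additive constant.'' My plan is to argue at the level of the induced distribution. Any minimizer must zero every summand, so it equals the target logits above. Any logit vector inducing the same $\pitheta$ differs from these by a constant $c(\strx)$, and its softmax is again $\polstar$. The equivalence class of optimal logits under the softmax map is therefore exactly $\{\log\polstar(\cdot\mid\strx) + c(\strx)\}$.

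Two technical points need checking along the way. First, the target logits must be finite wherever needed: if $\piref$ or the round-trip assigns zero mass to some response, we adopt the convention $\log 0 = -\infty$ and $\exp(-\infty)=0$, consistent with $\polstar$ vanishing there. Second, the normalizers $\Zone, \Ztwo$ must be finite. This is what makes $\polstar$ well defined, and we take it as already established in \cref{prop:backward-kl-optimum}.
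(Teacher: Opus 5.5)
Your proposal is correct and follows the paper's argument: every summand of $\mathcal{D}$ is a nonnegative norm, so the minimum value $0$ is attained exactly when each residual vanishes, and this pins the logits to the regression targets. You are also more careful than the paper's proof, which asserts that the loss vanishes at $z_\theta = \log\pi^\star$ itself; as you note, the zero-loss logits are the unnormalized targets $\log\pi^\star(\cdot \mid x) + \log Z_i(x)$, and the ``up to a prompt-dependent additive constant'' clause is exactly what absorbs this shift.
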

See \cref{app:proof-prop-dco-optimum} for the proof.

\begin{table*}
\setlength{\tabcolsep}{2.0 pt}
\centering
\begin{tabular}{l r r r  r r r  r r r  r r r  r r r}
\toprule
\multirow{2}{*}{\bf Method} 
  & \multicolumn{3}{c}{\modelname{Qwen2.5-14B}} 
  & \multicolumn{3}{c}{\modelname{Gemma3-12B-pt}}
  & \multicolumn{3}{c}{\modelname{Qwen3-14B}}
  & \multicolumn{3}{c}{\modelname{Aya-Expanse-8B}}
  & \multicolumn{3}{c}{\modelname{Llama3.1-8B}} \\
\cmidrule(lr){2-4} \cmidrule(lr){5-7} \cmidrule(lr){8-10} \cmidrule(lr){11-13} \cmidrule(lr){14-16}
& \clcall{} & \accen{} & \accnon{} 
   & \clcall{} & \accen{} & \accnon{} 
   & \clcall{} & \accen{} & \accnon{} 
  & \clcall{} & \accen{} & \accnon{} 
   & \clcall{} & \accen{} & \accnon{} \\
\midrule
Base 
    & $68.6$  & $72.5$  & $58.1$ 
    & $73.6$  & $70.1$  & $62.3$ 
    & $76.1$  & $76.6$  & $67.3$ 
    & $72.2$  & $59.8$  & $52.9$ 
    & $60.9$  & $57.3$  & $45.8$ \\ 
+ SFT$^*$ 
     & $+0.6$ & $+1.5$ & $+6.7$
     & $+0.6$ & $+0.7$ & $+1.6$
     & $-0.2$ & $+0.1$ & $+0.5$
     & $+3.5$ & $+0.7$ & $+0.5$
     & $+4.3$ & $+6.7$ & $+5.9$ \\ 
+ DPO$^*$
     & $+12.3$ & $+7.8$ & $+13.9$
     & $+6.5$ & $+1.8$ & $+3.4$
     & $+3.0$ & $+2.7$ & $+4.2$
     & $+1.3$ & $+2.5$ & $+2.5$
     & $+10.1$ & $+8.0$ & $+8.8$ \\ 
\makecell[l]{\hspace{0.5em} + DCO$^*$}  
     & $+13.1$ & $+7.6$ & $+13.5$
     & $+10.2$ & $+1.2$ & $+2.9$
     & $+4.4$ & $+2.8$ & $+4.3$
     & $+3.1$ & $+2.7$ & $+2.6$ 
     & $+13.8$ & $+7.3$ & $+8.9$ \\
\midrule
+ CALM
     & $+4.2$ & $+0.0$ & $+4.1$
     & $+3.0$ & $-0.4$ & $-0.0$ 
     & $+0.3$ & $-2.1$ & $-1.1$
     & $+1.4$ & $-2.2$ & $-2.1$
     & $+3.0$ & $-2.2$ & $-5.0$ \\
+ \methodname 
     & $+10.6$ & $+4.0$ & $+9.6$
     & $+6.5$ & $+0.9$ & $+2.5$
     & $+2.7$ & $+0.4$ & $+1.3$
     & $+5.3$ & $+0.5$ & $+0.5$
     & $+9.4$ & $+7.5$ & $+7.6$\\
\bottomrule
\end{tabular}
\caption{Comparison with prior methods on MMMLU under the joint-training setup. Methods marked with $^{*}$ are trained on human-annotated prompt--response pairs. 
See \cref{tab:comparison-app} for the full results.\vspace{-15pt}
}
\label{tab:comparison}
\end{table*}

\section{Experimental Setup} \label{sec:exp-setup}

\paragraph{Choice of norm for \methodname{}.}
In our experiments, we use the $L^1$-norm for the \methodname{}.
Preliminary experiments with $L^2$-norm performed worse---hence, our focus on the $L^1$-norm.

\paragraph{Models.}
We evaluate our method on 9 multilingual models from 4 LM families with sizes ranging from 3B to 14B, namely:
\modelname{Qwen2.5-7B/14B} \citep{qwen2025qwen25technicalreport}, \modelname{Qwen3-8B/14B} \citep{yang2025qwen3technicalreport}, \modelname{Aya-Expanse-8B} \citep{ustun-etal-2024-aya}, \modelname{Llama3.1-8B}, \modelname{Llama3.2-3B} \citep{dubey2024llama}, and \modelname{Gemma3-4B/12B} \citep{team2025gemma}. 
Details on the training configurations are provided in \Cref{app:implementation}.

\paragraph{Datasets.}
We consider three different multilingual question answering benchmarks: MMMLU \citep{hendrycks2021measuring}, XCSQA \citep{lin-etal-2021-common}, and BMLAMA \citep{qi-etal-2023-cross}.
All three contain parallel prompts and candidate responses over all tested languages, translated from their English origin.
MMMLU is a multilingual extension of the MMLU dataset on \emph{general knowledge}, translated into 14 languages by human annotators.
LMs are given a prompt and four candidate responses, and have to generate one option from $\{ \text{A}, \text{B}, \text{C}, \text{D} \}$.
In XCSQA, prompts are also multi-choice (5 options, 16 languages), but focus on \emph{commonsense reasoning}.
By contrast, BMLAMA \citep{qi-etal-2023-cross} evaluates LMs' \emph{factual associations} in 17 languages via parallel sentence prefixes paired with a varying number of candidate responses---for example, the prefix \textexample{The capital of Italy is} with responses \textexample{\{Rome, Paris, \ldots\}}. More detailed statistics and examples are provided in \Cref{sec:datasets}.

\paragraph{Evaluation Metrics.}
We report two metrics. 
(i) \emph{Crosslingual consistency} is approximated by RankC \citep{qi-etal-2023-cross}, which compares the likelihood of candidate responses across languages.\footnote{We use RankC as the main metric instead of directly computing the divergence in \cref{def:crosslingual-consistency} because the latter requires summation over the full generation space, which is intractable in practice. RankC operates only on the parallel candidate responses provided by the benchmark, is ranking-based (thus insensitive to tokenization differences and absolute likelihood scale across languages), and is the standard metric in prior work~\citep{qi-etal-2023-cross}, enabling direct comparison.
See \Cref{app:rankc} for the precise definition of RankC.}
(ii) \emph{Response accuracy} follows the LM-Evaluation-Harness protocol \citep{eval-harness}: the candidate completion with the highest model likelihood is selected and compared against the reference completion.\footnote{\scriptsize\url{https://github.com/EleutherAI/lm-evaluation-harness}}
With \clcall{}, we denote the average crosslingual consistency (measured by RankC) between all language pairs as a general metric of crosslingual consistency of an LM.
\accen{} and \accnon{} are the average accuracy on English and non-English instances, respectively.

\paragraph{Baselines.}
We compare \methodname{} with three previously proposed methods: supervised fine-tuning (SFT), DPO, and \textit{crosslingual self-aligning ability of language models} \citep[CALM;][]{wang-etal-2025-calm}.
In the case of SFT, we fine-tune the LMs on the prompts paired with their reference completions.
In the case of DPO, we construct preference pairs using the reference completion
as the preferred responses, and a randomly sampled wrong response as the dispreferred response.
We also evaluate two-stage approaches where the model is first trained with DPO on reference completions and then refined with \methodname{} using the \emph{same} prompts used for DPO.
CALM is an unsupervised method.
It constructs preference pairs without the need for supervision: for each prompt, it queries the model in multiple languages, takes the majority-voted response across languages as the preferred completion, and pairs it against the model's original response in each language if they differ.
It then runs DPO on these pairs, with the reference completion as the preferred responses.\looseness=-1

\begin{table*}[!ht]
\small
\centering
\begin{tabular}{lrrrrrrrrr}
\toprule
\multirow{2}{*}{\bf Model} & \multicolumn{3}{c}{\bf MMMLU} & \multicolumn{3}{c}{\bf XCSQA} & \multicolumn{3}{c}{\bf BMLAMA} \\ 
\cmidrule(lr){2-4} \cmidrule(lr){5-7} \cmidrule(lr){8-10}
& \clcall{} & \accen{} & \accnon{} & \clcall{} & \accen{} & \accnon{} & \clcall{} & \accen{} & \accnon{} \\
\midrule
\modelname{Qwen2.5-14B} 
& $68.6$ & $72.5$ & $58.1$
& $64.6$ & $87.0$ & $56.9$ 
& $41.9$ & $62.7$ & $38.6$ \\
+ \methodname 
& $+12.6$ & $+1.6$ & $+8.5$ 
& $+6.8$ & $-2.5$ & $+4.7$
& $+15.4$ & $+6.3$ & $+14.2$\\
\midrule
\modelname{Gemma3-12B-pt} 
& $73.6$ & $70.1$ & $62.3$ 
& $58.3$ & $66.0$ & $47.2$ 
& $42.2$ & $68.3$ & $38.3$ \\
+ \methodname 
& $+7.2$ & $-0.9$ & $+1.4$  
& $+4.6$ & $+0.1$ & $+3.6$ 
& $+16.7$ & $+1.6$ & $+17.0$  \\
\midrule
\modelname{Qwen3-14B} 
& $76.1$ & $76.6$ & $69.0$ 
& $61.9$ & $77.6$ & $54.0$ 
& $38.9$ & $58.4$ & $36.4$ \\
+ \methodname 
& $+4.8$ & $+0.1$ & $+1.7$  
& $+7.1$ & $+1.1$ & $+3.8$  
& $+16.2$ & $+8.1$ & $+14.5$  \\
\midrule
\modelname{Aya-Expanse-8B} 
& $72.2$ & $59.8$ & $52.9$ 
& $62.6$ & $78.0$ & $54.4$ 
& $41.9$ & $67.0$ & $37.8$ \\
+ \methodname 
& $+5.3$ & $+0.5$ & $+0.5$ 
& $+6.4$ & $+0.6$ & $+3.7$
& $+12.3$ & $+1.4$ & $+12.2$\\
\midrule
\modelname{Llama3.1-8B} 
& $60.9$ & $57.3$ & $45.8$ 
& $60.2$ & $67.5$ & $47.7$ 
& $40.9$ & $61.2$  & $35.8$ \\
+ \methodname
& $+12.1$ & $+0.7$ & $+3.0$ 
& $+9.1$ & $+0.2$ & $+1.3$ 
& $+15.7$ & $+7.2$ & $+17.6$ \\ \bottomrule
\end{tabular}
\caption{Crosslingual consistency with English, together with average accuracy on English and non-English instances, on MMMLU, XCSQA, and BMLAMA. See \Cref{app:bilingual-full-results} for the full results across nine LMs.
\vspace{-15pt}
}
\label{tab:full_results}
\end{table*}

\section{Results}

\subsection{Multilingual Experiments}\label{sec:multilingual}

Following the setup of \citet{wang-etal-2025-calm}, we first experiment on a multilingual post-training setup where each LM is aligned across $\nlang$ languages jointly in a single post-training process. 
In this setup, we experiment on $\nlang=12$ languages on the general knowledge dataset MMMLU.\footnote{We exclude Bengali as its input length exceeds the capacity of four A100 GPUs, and Swahili and Yoruba because most of the experimented LMs perform at random chance in these languages. See \cref{sec:direction} for targeted experiments on the low-resource languages, which demonstrate promising results with adjusted $\strengthi$ values.\looseness=-1
}

\Cref{tab:comparison} shows the experimental results.
SFT produces modest improvements in \clcall{} and response accuracy on \modelname{Llama3.1-8B}.
However, on larger models, e.g., \modelname{Qwen3-14B}, the improvement in \clcall{} is negligible. 
We interpret this finding as indicating that SFT alone is insufficient to improve crosslingual consistency.
In contrast, DPO yields greater gains in \clcall{}, \accen{}, and \accnon{} across all LMs considered.\looseness=-1

CALM requires more than two languages, limiting its applicability in bilingual settings.
Moreover, including low-resource languages can make majority voting unreliable, because, in Swahili and Yoruba, most experimented LMs perform at the level of random chance.
Our empirical findings bear out this concern: CALM yields only modest \clcall{} gains, while both English and non-English accuracy often remain flat or decline. 
This pattern is consistent with noisy majority voting: when low-resource languages are included, their near-random predictions can corrupt the pseudo-labels used for CALM training. 
By contrast, \methodname{} yields higher \clcall{} on all tested models while preserving accuracy or even improving it in many cases.
Notably, on some models (e.g., \modelname{Aya-Expanse-8B}) \methodname{} even surpasses DPO on \clcall{}, and on the rest it nearly matches DPO without the use of reference completions. 
We provide detailed CLC results for all language pairs in \Cref{app:joint-full-results-with-std}, showing that \methodname{} improves CLC for typologically similar languages as well as for typologically dissimilar languages.\looseness=-1

Additionally, we evaluate a two-step post-training recipe that combines reference-completion preference learning with consistency optimization ({DPO+\methodname{}}). This yields the best results:
applying \methodname{} as a post-step to a DPO-trained model consistently achieves the highest \clcall{} across all language models.
Accuracy remains comparable to DPO, with minor trade-offs in English for some models, and slight gains in non-English languages for others.
Taken together, our results highlight \methodname{} as the most versatile and practical option. It offers a robust path to crosslingual consistency while preserving (and often improving) task accuracy, and it can also serve as an effective post-step that further benefits models already trained with DPO.\looseness=-1

\begin{table*}
\centering
\resizebox{.95\textwidth}{!}{
\begin{tabular}{lrrrrrrrrrrrrrrr}
\toprule
\multirow{2}{*}{\bf Method} & \multicolumn{3}{c}{\bf Anatomy} & \multicolumn{3}{c}{\bf Medical Genetics} & \multicolumn{3}{c}{\bf High School Maths} & \multicolumn{3}{c}{\bf College Maths} \\
\cmidrule(lr){2-4} \cmidrule{5-7} \cmidrule{8-10} \cmidrule{11-13}
& \clcall{} & \accen{} & \accnon{} & \clcall{} & \accen{} & \accnon{} & \clcall{} & \accen{} & \accnon{} & \clcall{} & \accen{} & \accnon{} \\
\midrule
Base & 59.49 & 68.89 & 46.30 & 70.38 & 82.00 & 63.79 & 67.83 & 53.70 & 43.23 & 64.25 & 57.00 & 37.50 \\
+ \methodname & +10.94 & +1.80 & +3.76 & +10.33 & +5.43 & +7.00 & +11.27 & +2.38 & +6.80 & +11.36 & -0.36 & +12.36 \\
\bottomrule
\end{tabular}
}
\caption{Cross-domain performance on \modelname{Qwen2.5-14B}. The model is trained with \methodname{} on \emph{high school microeconomics} (390 prompts) and tested on distinct domains on MMMLU.
Detailed results by language, and for more test domains, are shown in \Cref{app:bilingual-full-results}.
\vspace{-25pt}
}
\label{tab:cross-domain}
\end{table*}

\subsection{Bilingual Experiments}
\label{sec:bilingual}
The experiments in \Cref{sec:multilingual} target \emph{joint} consistency improvements across many languages, a setting aligned with large multilingual foundation models. 
In other practical scenarios, however, developers may be interested in aligning knowledge between English and a specific target language.
To assess this use case, we instantiate a \emph{bilingual} version of \methodname{} that aligns English with one non-English language.
Our choice of English is empirically grounded: querying the LM in English yields the highest accuracy in our preliminary study (\Cref{app:bilingual-full-results}).
Nevertheless, we still provide alignment results between two non-English languages in \Cref{app:non_english_alignment}.
We also evaluate on XCSQA (commonsense reasoning) and BMLAMA (factual association).
\Cref{tab:full_results} reports CLC and average response accuracy on English and non-English. 
For space reasons, we present the largest model in each family; see \cref{app:bilingual-full-results} for full results on nine LMs.\looseness=-1

Overall, \methodname{} substantially improves both \clcall{} and accuracy across datasets. 
On MMMLU, \clcall{} increases by +4.79 to +12.60 across all models, with concurrent gains in the accuracy of non-English \accnon (+0.46 to +8.49) and remains largely stable in English accuracy. 
On XCSQA, \clcall{} improves by +4.61 to +9.10, with smaller changes in English accuracy  (–2.53 to +1.07, with the single notable dip on \modelname{Qwen2.5-14B}), while non-English accuracy increases consistently (+1.27 to +4.67). 
The largest gains appear on BMLAMA, where \clcall{} improves by +12.29 to +16.65 and both English accuracy (+1.43 to +8.07) and non-English accuracy (+12.16 to +17.62) rise markedly. 
We hypothesize that BMLAMA benefits more from DCO because outputs are concrete factual entities rather than abstract option labels, making distributional alignment across languages more direct. \looseness=-1

\subsection{Out-of-Domain Generalizability}

To further assess whether the benefits of \methodname{} extend beyond the specific domain the model was post-trained on, we conduct a controlled experiment using \modelname{Qwen2.5-14B}: \methodname{} is performed on a single subject within MMMLU (i.e., \emph{high school microeconomics}) and evaluated on various other subjects from the same dataset. For easier interpretation of the results and for managing computational costs, we keep the bilingual DCO setup in the rest of our experiments.

\Cref{tab:cross-domain} shows strong out-of-domain transfer from a single training subject. 
\methodname{} increases \clcall{} by about 11\% on average across all target domains, indicating that CLC is enhanced beyond the specific post-training domain. 
Non-English accuracy also improves significantly, with the largest gain on \emph{college mathematics} (+12.36),
reflecting effective knowledge transfer from English to lower-resourced languages, even without reference completion supervision.
English accuracy is largely preserved, with only a negligible decrease of 0.36 on \emph{college mathematics},
showing that, in this case, DCO does not overfit to the training subject or degrade the model’s primary language competence. 
Taken together, our results support the application of 
\methodname{} in the case where training data are scarce or domain-specific, and the target domain may differ from that of the training data.\looseness=-1

\subsection{Strength Parameter Study}
\label{sec:direction}

\paragraph{Setup.}
We study how \methodname{}'s parameters $\strengthone, \strengthtwo$ affect the results.
We focus on the low-resource transfer case, and select $\langone$ to be English and $\langtwo$ to be either Swahili (\langname{sw}) or Yoruba (\langname{yo}), which have the lowest baseline accuracy on MMMLU.
We consider three settings of $\strengthone$ and $\strengthtwo$:
\textbf{Default} ($\strengthone{=}1,\strengthtwo{=}1$), 
\textbf{\langname{sw}/\langname{yo}-Anchored} ($\strengthone{=}10,\strengthtwo{=}0.1$), and 
\textbf{\langname{en}-Anchored}
($\strengthone{=}0.1,\strengthtwo{=}10$).
All other experimental settings remain the same as in \Cref{sec:bilingual}.
Note that all settings of $\strengthone$ and $\strengthtwo$ keep the invariant that $\strengthone\strengthtwo = 1$.

\paragraph{Results.}
We only present results of DCO for aligning \langname{en} and \langname{sw} in \Cref{fig:change_sw} here and leave the results between \langname{en} and \langname{yo} to \Cref{app:en-yo}.
However, across both of \langname{sw} and \langname{yo}, we observe similar trends.
With the default strength parameters, the \langname{en} accuracy declines by $4.80$ points while \langname{sw} gains $+2.29$. 
As expected, the \langname{sw}-anchored strengths ($\strengthone=10, \strengthtwo=0.1$) severely harm accuracy for \langname{en} prompts ($-11.74$) while minimally improving accuracy on \langname{sw} ($+0.18$). 
Finally, the \langname{en}-anchored strengths ($\strengthone=0.1, \strengthtwo=10$) yield improvement in both consistency and accuracy: the accuracy of \langname{sw} increases by $+4.24$ while the accuracy in \langname{en} remains stable; in this case, it even slightly increases.
By contrast, CLC improves in all strength parameter settings. 
The baseline \clcall{} of 49.37 rises to 59.21, 59.50, 56.59 under $(\strengthone{:}\strengthtwo)=(1{:}1),(10{:}0.1),(0.1{:}10)$ respectively, showing that \methodname{} reliably optimizes crosslingual consistency measured by RankC across different direction weights. 
Notably, the larger \clcall{} boost from the default strengths comes with a substantial \langname{en} accuracy drop, whereas the \langname{en}-anchored setup achieves a more favorable balance: that is, slightly smaller yet still significant gains in \clcall{} but stable or improved \langname{en} accuracy. \looseness=-1

\begin{figure*}
    \centering
    \includegraphics[width=.99\linewidth]{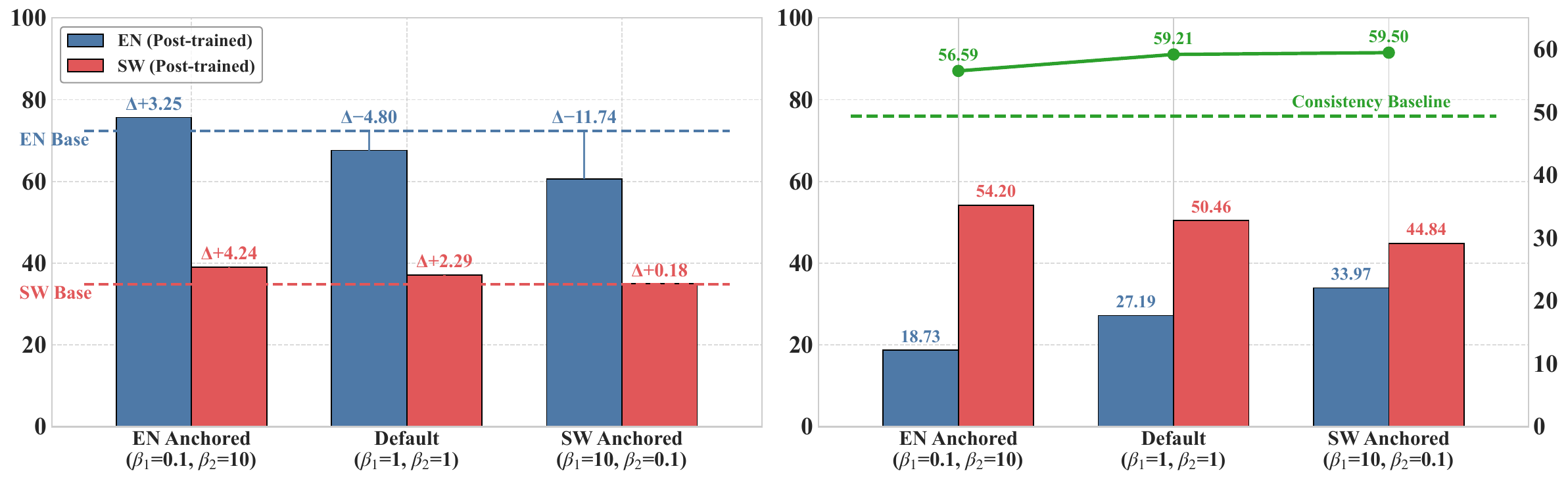}
    \caption{Left: Response accuracy after performing DCO on English--Swahili. Right: Proportion of prompts for which the LM's response changes after DCO, with CLC values marked in green.
    }\vspace{-5pt}
    \label{fig:change_sw}
\end{figure*}

\paragraph{Analysis.}
To better understand the effects of ($\strengthone,\strengthtwo$), we measure the proportion of prompts for which the model's response changes after post-training with \methodname{}. 
As shown in \Cref{fig:change_sw} (right), the low-resource language exhibits a larger increase in the proportion of prompts that change than \langname{en}, and the strength parameters control which side is allowed to move. 
The \langname{en}-anchored setting changes only 18.73\% of \langname{en} responses but 54.20\% of \langname{sw}. 
Under the default strengths, the number of responses in \langname{en} that change increases, whereas the number of responses in \langname{sw} that change decreases. 
In the \langname{sw}-anchored setting, the share of updates shifts toward \langname{en} (33.97\% \langname{en} vs.\ 44.84\% \langname{sw}).
A similar trend holds for English--Yoruba (see \Cref{app:en-yo}). 
These align with the accuracy-consistency results: \langname{en}-anchored settings keep the high-resource channel stable while DCO primarily revises the low-resource outputs, thereby yielding improved CLC and higher non-\langname{en} accuracy. 
By contrast, low-resource anchored setups tend to induce unnecessary changes to \langname{en} outputs, while still confirming that the strength parameters can effectively control the direction and magnitude of updates. Importantly, this should not be misinterpreted as meaning that assigning a larger $\strengthi$ to \langname{en} is universally beneficial. 
We provide practical guidance on choosing weights for real-world applications in \Cref{app:direction_guidance}.

\subsection{Directly Optimizing \pcomethod{}}
Beyond the multiple-choice evaluations of the previous sections, we test whether the same consistency-driven reward can be used in an on-policy RL setting for open-ended generation.
Due to compute constraints, this exploration is intentionally limited in scope: we focus on English--Chinese (\langname{en}-\langname{zh}) and two open-ended benchmarks, MMMLU and GSM8K. For MMMLU (multiple-choice), we allow the model to produce intermediate reasoning before outputting the final option; for GSM8K (multi-step math), the model generates a solution trace and then the final numeric response. To quantify CLC in this open-ended setting, we measure the overlap of correctly solved prompts across languages using the Jaccard similarity between the sets of correctly answered items in \langname{en} and \langname{zh}.

\paragraph{Setup.}
We optimize the \pcomethod{} objective in \cref{eq:backward-kl-objective} using the algorithm of \citet{guo2024directlanguagemodelalignment}.
Each training batch contains 32 prompts sampled uniformly from the dataset. We use AdamW with learning rate 5e-6 for \modelname{Qwen} and 2e-6 for \modelname{gemma}. 
We emphasize that these settings are chosen for a lightweight feasibility check rather than an exhaustive hyperparameter study, and we leave a more comprehensive investigation of on-policy alignment for future work.

\paragraph{GSM8K.}
Without any human-annotated parallel supervision, optimizing \pcomethod{} improves both per-language accuracy and \langname{en}--\langname{zh} consistency on two base models. These results suggest that \methodname-style alignment generalizes to open-ended reasoning by combining translation-based pseudo pairs with a consistency-aware reward.

\paragraph{MMMLU.} 
We observe a similar pattern on MMMLU under RL training (\Cref{tab:onpolicy_rl_mmmlu}): optimizing our reward produces consistent gains in both accuracy and CLC. These preliminary results indicate that the approach can remain effective even when crosslingual pairs are not explicitly pre-defined, but we emphasize that this section is meant as an initial exploration rather than a comprehensive evaluation.\looseness=-1

\subsection{Discussion}
\paragraph{Where do the accuracy gains come from?}
In general, when a language model performs poorly on a task, it tends to have a \emph{high-entropy distribution} over the candidate responses.
In contrast, a low-entropy one that is skewed toward an incorrect response is less common. 
A low-entropy expert only contributes minimally to the final distribution, allowing the ensemble to rely more on high-confidence predictions from other experts.
As a case study, we verify this assumption using \modelname{Qwen2.5-14B} on the MMMLU dataset, where the average entropy of the response distribution on the prompts that are correctly answered is $0.41 \pm 0.41$, while for incorrectly answered prompts, it is significantly higher at $0.98 \pm 0.33$.
The accuracy of the theoretical optimal policy $\polstar$ on MMMLU using \modelname{Qwen2.5-14B} is $77.0$, surpassing the accuracy of the base policy in individual languages.\looseness=-1

\paragraph{Beyond crosslingual consistency.}
While this work focuses on \emph{crosslingual} knowledge consistency, the training objective of \methodname{} is not limited to this task and can naturally be extended to other forms of consistency where suitable maps $\transtwoone$ and $\transonetwo$ can be defined.
For instance, by aligning the output distributions over candidate responses of paraphrased prompts, the model could be encouraged to respond consistently regardless of surface variations.

\section{Related Work}
Several lines of work have studied crosslingual knowledge consistency in multilingual LMs, both by measuring it and by trying to improve it. On the evaluation side, alongside the RankC metric \citep{qi-etal-2023-cross}, \citet{xing2024evaluating} and \citet{ai2025is} assess consistency through the agreement of top-1 responses to the same prompt posed in different languages.
\citet{jiang-etal-2020-x} compute the average overlap of correct predictions across languages.
These studies reveal substantial inconsistencies across a wide range of LMs. 
On the modeling side, a recent line of work attempts to improve consistency through vector interventions on the hidden representations of LMs \citep{lu2025paths, wang-etal-2025-lost-multilinguality,liu-niehues-2025-middle}; while promising, such methods have been validated only on small datasets and a handful of models, limiting their scalability.

\section{Conclusion}
We formalize crosslingual consistency as a divergence between a language model's response distribution and its round-trip pushforward across languages.
Additionally, we introduce penalized consistency optimization (\pcomethod{}), which combines this divergence with a Kullback--Leibler penalty to a fixed reference policy.
Because directly optimizing \pcomethod{} requires expensive on-policy roll-outs, we propose \methodname{}, an off-policy surrogate whose unique minimizer coincides with that of \pcomethod{}.

Across extensive experiments, we find \methodname{} consistently improves CLC over a wide range of models and datasets.
In our comparisons, \methodname{} delivers robust gains over existing methods, and, when reference completions are available, combining \methodname{} with DPO yields the strongest overall alignment in the joint $N$-language training setting.
In bilingual settings, \methodname{} likewise improves CLC, raising accuracy in non-English languages while preserving accuracy in English.
\methodname{} also generalizes across domains, with gains persisting on subjects unseen during training.
Our analysis of the direction-controlling weights further shows how practitioners can steer alignment toward specific languages to meet deployment needs.\looseness=-1
The structured reward underlying \methodname{} has potential applications beyond crosslingual knowledge consistency---for example, improving self-consistency across paraphrases \citep{wu2025rewordbench} or consistency across modalities.
Because \methodname{} is computationally efficient, it offers a practical path to multilingual LMs that are not only accurate but also reliable and equitable across languages.

\begin{table}[t]
\centering
\resizebox{.99\hsize}{!}{
\begin{tabular}{lccc}
\toprule
Model & Acc \langname{en} & Acc \langname{zh} & Consistency \\
\midrule
\modelname{Qwen2.5-7B-Instruct} & 89.2 & 83.6 & 84.7 \\
+ on-policy RL & \textbf{90.0} & \textbf{86.8} & \textbf{86.9} \\
\modelname{gemma-3-12b-it} & 90.3 & 87.1 & 87.7 \\
+ on-policy RL & \textbf{92.3} & \textbf{88.1} & \textbf{89.2} \\
\bottomrule
\end{tabular}
}
\caption{On-policy results on GSM8K.}\vspace{-15pt}
\label{tab:onpolicy_rl_gsm8k}
\end{table}

\section*{Impact Statement}
This paper advances the field of machine learning by improving the crosslingual consistency of language models.
While such consistency is desirable for fairness and reliability, we caution that enforcing it too strictly risks erasing legitimate language-specific and culture-specific variation, where the ``correct'' answer may genuinely differ across linguistic contexts.
We encourage future work to distinguish undesirable inconsistency from meaningful cultural divergence, and to treat the latter with care.

\begin{table}
\centering
\resizebox{.99\hsize}{!}{
\begin{tabular}{lccc}
\toprule
Model & Acc \langname{en} & Acc \langname{zh} & Consistency \\
\midrule
\modelname{Qwen2.5-7B-Instruct} & 66.2 & 58.8 & 68.9 \\
+ on-policy RL & \textbf{70.1} & \textbf{59.7} & \textbf{72.3} \\
\modelname{gemma-3-12b-it} & 71.9 & 64.8 & 70.9 \\
+ on-policy RL & \textbf{72.4} & \textbf{66.7} & \textbf{71.8} \\
\bottomrule
\end{tabular}
}
\caption{On-policy results on MMMLU.}\vspace{-15pt}
\label{tab:onpolicy_rl_mmmlu}
\end{table}

\section*{Acknowledgments}
This work was supported as part of the ``Swiss AI initiative'' by a grant from the Swiss National Supercomputing Center (CSCS) under project ID 87 on Alps Cluster. This work also used the Dutch national e-infrastructure with the support of the SURF Cooperative using grant no. EINF-17648.

The authors have received funding from the Dutch Research Council (NWO): 
JQ is supported by NWA-ORC project LESSEN (grant nr. NWA.1389.20.183).
AB is supported by the above as well as NWO Talent Programme (VI.Vidi.221C.009).

\bibliography{custom, anthology}
\bibliographystyle{icml2026}

\appendix
\onecolumn

\addtocontents{toc}{\protect\setcounter{tocdepth}{2}}
\renewcommand{\contentsname}{Appendix}
\tableofcontents
\vspace{1em}

\newpage 

\section{Language Model Usage}
LM tools were used  to improve writing clarity. They did not contribute to the conceptual development, experimental design, or analysis. The authors reviewed and edited all assisted text, and the final manuscript is \textit{entirely} author-written. 

\section{Discussion on the Definition of Crosslingual Consistency} 
\label{app:not_exact_distribution}
One might attempt to use a stricter definition of consistency, such as requiring exact probability matches $\policyone(\byone \mid \bxone) = \policytwo(\transonetwo(\byone) \mid \transonetwo(\bxone))$ for every $\bxone \in \langone$ and $\byone \in \langone$.
Previous work has shown that even semantically identical sentences in different languages can have likelihoods that differ significantly due to lexical, semantic, and tokenization differences \citep{lesci-etal-2025-causal}.
For instance, in \modelname{Gemma3-12B-it}, using a temperature of $1$, $\pi(\textexample{ Paris.}\mid \textexample{The capital of France is}) = 0.8991$, while $\pi(\textexample{ Paris.}\mid \textexample{Die Hauptstadt Frankreichs ist}) = 0.1283$.
Thus, enforcing exact probability matches would ignore these inherent differences and lead to suboptimal alignment.
Instead, we adopt the softer divergence-based relaxation of \cref{def:crosslingual-consistency}, which tolerates an $\varepsilon$-bounded $f$-divergence between $\policyone$ and the back-translated $\transtwoone \pf \policytwo \pf \transonetwo$ (and vice versa) rather than pointwise equality.

\section{Proofs}

\subsection{Proof of \cref{prop:backward-kl-optimum}}\label{app:proof-prop-backward-kl}
\backwardkloptimum*
\begin{proof}
By \cref{eq:backward-kl-objective-full}, $\pco$ is a $\monoone$- and $\monotwo$-weighted expectation of per-prompt losses, so it suffices to minimize the per-prompt loss separately at each $\strx \in \mathrm{supp}(\monoone) \cup \mathrm{supp}(\monotwo)$.

\paragraph{Case $\strx \in \langone$.} By the disjoint-support assumption, $\monotwo(\strx) = 0$, so only $\pcoone(\btheta, \strx)$ contributes at $\strx$. Expanding the Kullback--Leibler divergence and the cross-entropy in \cref{eq:backward-kl-objective},
\begin{align*}
\pcoone(\btheta, \strx)
  &= \E_{\stry \sim \pitheta}\bigl[\log\pitheta(\stry \mid \strx) - \strengthone  \log (\transtwoone \pf \piref \pf \transonetwo)(\stry \mid \strx) - \log\piref(\stry \mid \strx)\bigr] \\
  &= \kl\left(\pitheta(\cdot \mid \strx) \Big\| \frac{(\transtwoone \pf \piref \pf \transonetwo)(\cdot \mid \strx)^{\strengthone} \piref(\cdot \mid \strx)}{\Zone(\strx)}\right) - \log \Zone(\strx),
\end{align*}
where $\Zone(\strx) \defeq \sum_{\stry \in \langone} (\transtwoone \pf \piref \pf \transonetwo)(\stry \mid \strx)^{\strengthone} \piref(\stry \mid \strx)$ is the $\pitheta$-independent normalizer. 
The Kullback--Leibler divergence is uniquely minimized in $\pitheta(\cdot \mid \strx)$ at the target, yielding
\begin{equation}
\polstar(\cdot \mid \strx) \propto (\transtwoone \pf \piref \pf \transonetwo)(\cdot \mid \strx)^{\strengthone} \piref(\cdot \mid \strx),
\end{equation}
the $\strx \in \langone$ branch of \cref{eq:backward-kl-optimum}.

\paragraph{Case $\strx \in \langtwo$.} Symmetric: $\monoone(\strx) = 0$, and the same calculation applied to $\pcotwo$ gives
\begin{equation}
\polstar(\cdot \mid \strx) \propto (\transonetwo \pf \piref \pf \transtwoone)(\cdot \mid \strx)^{\strengthtwo} \piref(\cdot \mid \strx),
\end{equation}
the $\strx \in \langtwo$ branch.

\paragraph{Case $\strx \notin \langone \cup \langtwo$.} Both $\monoone(\strx) = \monotwo(\strx) = 0$, so the per-prompt contribution to $\pco$ vanishes for every choice of $\pitheta(\cdot \mid \strx)$; $\polstar(\cdot \mid \strx)$ is undetermined, the third branch.

Uniqueness on $\mathrm{supp}(\monoone) \cup \mathrm{supp}(\monotwo)$ follows from strict convexity of $\kl(\cdot \| p)$ in its first argument when the target $p$ has positive support on $\lang_i$, which $\piref$ ensures.
\end{proof}

\subsection{Proof of \cref{prop:backward-kl-consistency}}\label{app:proof-prop-backward-kl-consistency}
\backwardklconsistency*
\begin{proof}
We prove the direction $\langone \to \langtwo$; the symmetric direction follows by exchanging the roles of $\langone$ and $\langtwo$.

Invertibility of the translators says $\transonetwo\pf\transtwoone = \id$ and $\transtwoone\pf\transonetwo = \id$, so there is a bijection $\tau\colon \langone \to \langtwo$ with $\transonetwo(\bxtwo \mid \bxone) = \mathbbm{1}[\bxtwo = \tau(\bxone)]$ and $\transtwoone(\byone \mid \bytwo) = \mathbbm{1}[\byone = \transtwoone(\bytwo)]$.
In particular, $\pone(\byone \mid \bxone) = (\transtwoone \pf \piref \pf \transonetwo)(\byone \mid \bxone) = \piref(\transonetwo(\byone) \mid \transonetwo(\bxone))$, and symmetrically $\ptwo(\bytwo \mid \bxtwo) = \piref(\transtwoone(\bytwo) \mid \transtwoone(\bxtwo))$.
By \cref{prop:backward-kl-optimum} and exact balance $\strengthone\strengthtwo = 1$,
\begin{subequations}
\begin{align}\label{eq:proof-polstar-tilt}
    \polstar(\byone \mid \bxone) &\propto \pone(\byone \mid \bxone)^{\strengthone} \piref(\byone \mid \bxone) = \piref(\transonetwo(\byone) \mid \transonetwo(\bxone))^{\strengthone}\piref(\byone \mid \bxone), \\
    \polstar(\bytwo \mid \bxtwo) &\propto \ptwo(\bytwo \mid \bxtwo)^{\strengthtwo} \piref(\bytwo \mid \bxtwo) = \piref(\transtwoone(\bytwo) \mid \transtwoone(\bxtwo))^{\strengthtwo} \piref(\bytwo \mid \bxtwo).
\end{align}
\end{subequations}
Substituting $\bytwo = \transonetwo(\byone)$ and $\bxtwo = \transonetwo(\bxone)$ in the second line and raising to the power $\strengthone$,
\begin{equation}
    \polstar(\transonetwo(\byone) \mid \transonetwo(\bxone))^{\strengthone} 
    \propto \piref(\byone \mid \bxone)^{\strengthone\strengthtwo}\piref(\transonetwo(\byone) \mid \transonetwo(\bxone))^{\strengthone} 
    = \piref(\byone \mid \bxone)\piref(\transonetwo(\byone) \mid \transonetwo(\bxone))^{\strengthone},
\end{equation}
using $\strengthone\strengthtwo = 1$ and $\transtwoone\pf\transonetwo = \id$.
The right-hand side coincides with the right-hand side of the first line of \cref{eq:proof-polstar-tilt} as a function of $\byone$, so after normalization (both sides are probability distributions over $\langone$ given $\bxone$),
\begin{equation}
    \polstar(\cdot \mid \bxone) = \polstar(\transonetwo(\cdot) \mid \transonetwo(\bxone))^{\strengthone} = (\transtwoone \pf \polstar \pf \transonetwo)^{\strengthone}(\cdot \mid \bxone).
\end{equation}
Therefore $D_f\bigl(\polstar(\cdot\mid\bxone)\big\|(\transtwoone\pf\polstar\pf\transonetwo)^{\strengthone}(\cdot\mid\bxone)\bigr) = 0$.
\end{proof}

\subsection{Proof of \cref{prop:dco-optimum}}\label{app:proof-prop-dco-optimum}
\dcooptimum*
\begin{proof}
Due to the non-negativity of the norm, both $\dcoone(\btheta, \bxone)$ and $\dcotwo(\btheta, \bxtwo)$ are non-negative, thus the minimum of $\dco(\btheta)$ is $0$.

When $\logits(\stry \mid \strx) = \log\polstar(\stry \mid \strx)$, $\dco(\btheta)$ reaches its minimum.

When $\dco(\btheta) = 0$, both $\dcoone(\btheta, \bxone)$ and $\dcotwo(\btheta, \bxtwo)$ are $0$, meaning that $\logits(\stry \mid \strx) = \log\polstar(\stry \mid \strx)$ for all $\strx$ and $\stry$.
Thus, the minimizer $\logitsstar$ of $\dco$ (\Cref{eq:dco-full}) is $\logitsstar(\stry \mid \strx) = \log\polstar(\stry \mid \strx)$.
\end{proof}

\section{Generalization to $N$ Languages}\label{app:generalize}

We generalize \cref{sec:alignment-consistency} to $N$ languages, which we index by $m, n \in [N]$. 
Each language $\lang_m$ comes with a monolingual prompt distribution $\mu_m \in \Delta(\lang_m)$ and pairwise translators $\translate_m^n\colon \lang_m \rightsquigarrow \lang_{n}$ for $m \neq n$. Mirroring \cref{sec:alignment-consistency}, define the round-trip target for prompts in $\lang_m$ routed via $\lang_{n}$:
\begin{equation}
    p_{mn}(\cdot \mid \strx) \defeq \translate_n^m \pf \piref \pf \translate_m^n(\cdot \mid \strx) \in \Delta(\lang_m), \qquad \strx \in \lang_m, n \ne m.
\end{equation}
Mirroring \cref{eq:backward-kl-objective}, let $\strengthmatrix = [\strength_{mn}]_{m,n=1}^{N} \in \R^{N \times N}_{>0}$ be a positive strength matrix with $\strength_{mn}$ for $m \neq n$ scaling $\log p_{mn}$ and $\strength_{mm}$ scaling $\log\piref$ at $\lang_m$. 
The $N$-language objective decomposes into one language-specific half per language:
\begin{equation}\label{eq:N-objective}
\begin{aligned}
    \pco(\btheta) \defeq \sum_{m=1}^{N} \E_{\strx \sim \mu_{m}} \Bigl[ \underbrace{\kl\bigl(\pitheta(\cdot \mid \strx) \big\| \piref(\cdot \mid \strx)\bigr)}_{\text{fidelity}}  -
     \underbrace{\sum_{\substack{n=1, \\ n \neq m}}^N  \strength_{mn} \log p_{mn}(\cdot \mid \strx)}_{\text{reward}} \Bigr].
\end{aligned}
\end{equation}
The bilingual case ($N=2$) recovers \cref{eq:backward-kl-objective} with $\strength_{11} = \strength_{22} = 1$, $\strength_{12} = \strengthone$, and $\strength_{21} = \strengthtwo$.

\begin{proposition}\label{prop:N-backward-kl-optimum}
Let $\alphabet$ be an alphabet.
Assume the languages $\lang_1, \ldots, \lang_N \subseteq \alphabet^*$ are pairwise disjoint. 
For each $m \in [N]$ and every $\strx \in \mathrm{supp}(\mu_m)$, the unique minimizer of $\pco$ in \cref{eq:N-objective} is given by
\begin{equation}\label{eq:N-backward-kl-optimum}
    \polstar(\cdot \mid \strx) \propto \piref(\cdot \mid \strx) \prod_{\substack{n=1, \\ n \neq m }}^N p_{mn}(\cdot \mid \strx)^{\strength_{mn}}, \qquad \strx \in \lang_m,
\end{equation}
with strength parameters independent of $\strx$.
\end{proposition}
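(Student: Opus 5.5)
The plan mirrors the proof of \cref{prop:backward-kl-optimum}. First, I decouple the objective pointwise in the prompt. By \cref{eq:N-objective}, $\pco$ is a sum over $m$ of $\mu_m$-weighted expectations of per-prompt losses. Since the $\lang_m$ are pairwise disjoint and $\mu_m \in \Delta(\lang_m)$, a prompt $\strx \in \mathrm{supp}(\mu_m)$ has $\mu_{m'}(\strx) = 0$ for every $m' \neq m$. Hence exactly one summand depends on $\pitheta(\cdot \mid \strx)$. Because the model is unconstrained across prompts (we optimize over the full set of prompted language models), minimizing $\pco$ amounts to minimizing, separately for each such $\strx$, the loss
\[
\ell_m(\strx) = \E_{\stry \sim \pitheta(\cdot\mid\strx)}\Bigl[\log\pitheta(\stry\mid\strx) - \log\piref(\stry\mid\strx) - \sum_{n\neq m} \strength_{mn}\log p_{mn}(\stry\mid\strx)\Bigr].
\]
As in the bilingual case, I read the reward term as an expectation under $\pitheta(\cdot\mid\strx)$.

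Second, I complete the KL. Define the unnormalized target $q_m(\stry\mid\strx) \defeq \piref(\stry\mid\strx)\prod_{n\neq m} p_{mn}(\stry\mid\strx)^{\strength_{mn}}$ and its normalizer $Z_m(\strx) \defeq \sum_{\stry\in\lang_m} q_m(\stry\mid\strx)$. Rewriting gives
\[
\ell_m(\strx) = \kl\bigl(\pitheta(\cdot\mid\strx) \,\big\|\, q_m(\cdot\mid\strx)/Z_m(\strx)\bigr) - \log Z_m(\strx).
\]
By Gibbs' inequality this is minimized exactly when $\pitheta(\cdot\mid\strx) = q_m(\cdot\mid\strx)/Z_m(\strx)$, which is \cref{eq:N-backward-kl-optimum}. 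Uniqueness follows from strict convexity of $\kl(\cdot\,\|\,p)$ in its first argument, or directly from the equality case of Gibbs' inequality. The strengths $\strength_{mn}$ are constants of the objective and do not depend on $\strx$, so the tilt exponents are prompt-independent as stated.

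The main obstacle is well-definedness of the target, that is, $0 < Z_m(\strx) < \infty$. For finiteness, each $p_{mn}(\cdot\mid\strx)$ is a probability distribution, so $p_{mn}\le 1$ pointwise, and since $\strength_{mn} > 0$ we get $q_m \le \piref$ and therefore $Z_m(\strx) \le 1$. Positivity of $Z_m(\strx)$ needs some $\stry$ at which $\piref$ and all $p_{mn}$ are jointly positive. I would assume this in the same way the bilingual proof assumes $\piref$ has positive support on $\lang_i$, and I would flag the assumption explicitly. Points where $q_m = 0$ force $\pitheta = 0$ there, which is handled by the usual $0\log 0 = 0$ convention.
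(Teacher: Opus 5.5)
Your proposal is correct and follows essentially the same route as the paper. You decouple the objective per prompt using the disjoint supports, rewrite the per-prompt loss as $\kl(\pitheta(\cdot\mid\strx)\,\|\,q_m(\cdot\mid\strx)/Z_m(\strx)) - \log Z_m(\strx)$, and conclude existence and uniqueness by Gibbs' inequality. Your check that $Z_m(\strx) \le 1$ (via $p_{mn}^{\strength_{mn}} \le 1$), your explicit positivity assumption on $Z_m(\strx)$, and your retention of the $-\log\piref$ term (which the paper's first displayed line drops but its second line restores) are refinements of the same argument, not a different one.
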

\begin{proof}
The proof extends the bilingual case (\cref{prop:backward-kl-optimum}).
By the disjoint-support assumption, it suffices to consider $\strx \in \lang_m$, in which case
\begin{align} \label{eq:align-n-kl}
\pco(\btheta, \strx)
  &= \E_{\stry \sim \pitheta}\bigl[\log\pitheta(\stry \mid \strx) - \sum_{\substack{n=1, \\ n \neq m}}^N \strength_{mn} \log (\translate_n^m \pf \piref \pf \translate_m^n)(\stry \mid \strx) \bigr] \nonumber \\
  &= \kl\left(\pitheta(\cdot \mid \strx) \Big\| \frac{\prod_{\substack{n=1, \\ n \neq m}}^N (\translate_n^m \pf \piref \pf \translate_m^n)(\cdot \mid \strx)^{ \strength_{mn} }\piref(\cdot \mid \strx)}{Z_{m}(\strx)}\right) - \log Z_{m}(\strx).
\end{align}
The minimizer of \cref{eq:align-n-kl} is $ \polstar(\cdot \mid \strx) \propto \prod_{\substack{n=1, \\ n \neq m}}^N p_{mn}(\cdot \mid \strx)^{\strength_{mn}} \cdot \piref(\cdot \mid \strx) \text{ for } \strx \in \lang_m$.
\end{proof}

Mirroring \cref{prop:backward-kl-consistency}, we give a pairwise consistency guarantee when $\strengthmatrix$ is rank one.

\begin{proposition}\label{prop:N-backward-kl-consistency}
Suppose matrix $\strengthmatrix$ is rank one, i.e., $\strength_{mn} = u_m v_n$ for some positive $\mathbf{u}, \mathbf{v} \in \R^N_{>0}$, as noted in \cref{sec:pco}, and $\strength_{mm} = 1$ for $m \in [N]$, and the pairwise translators $(\translate_m^n, \translate_n^m)$ for $m \neq n$ are invertible, i.e., $\translate_m^n \pf \translate_n^m = \id$ and $\translate_n^m \pf \translate_m^n = \id$.
Moreover, the translators are mutually consistent: $\translate_m^\ell \pf \translate_n^m = \translate_n^\ell$ for all distinct $m, n, \ell \in [N]$ (the cocycle identity), which holds in particular whenever all translators are restrictions of a single global bijection between the language sets.
Suppose we have prompts in $N$ languages $(\strx_1, \ldots, \strx_N)$ that satisfy $\strx_{m} = \translate^{m}_n(\strx_n)$ for all $m \ne n$.
Then, we have
\begin{equation}
    D_f\bigl(\polstar(\cdot \mid \strx_m)^{u_n} \| (\translate_{n}^m \pf \polstar \pf \translate_{m}^{n})(\cdot \mid \strx_m)^{u_m} \bigr) = 0 \qquad \forall m \neq n.
\end{equation}%
\end{proposition}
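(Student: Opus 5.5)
The plan is to reduce everything, as in the proof of \cref{prop:backward-kl-consistency}, to statements about a single ``global'' response variable transported across languages by deterministic bijections. After that reduction, the closed form from \cref{prop:N-backward-kl-optimum} is a product of powers of the \emph{same} function evaluated in every language, up to language-specific exponents.

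\textbf{Step 1: reduce translators to bijections.} Invertibility of each pair $(\translate_m^n,\translate_n^m)$ makes every translator a deterministic bijection $\lang_m\to\lang_n$, which I also denote $\translate_m^n$. The round-trip then collapses to a single evaluation:
\[
(\translate_n^m\pf\piref\pf\translate_m^n)(\stry\mid\strx_m)=\piref(\translate_m^n(\stry)\mid\translate_m^n(\strx_m))=\piref(\translate_m^n(\stry)\mid\strx_n),
\]
using $\strx_n=\translate_m^n(\strx_m)$. The same computation with $\polstar$ in place of $\piref$ gives
\[
(\translate_n^m\pf\polstar\pf\translate_m^n)(\stry\mid\strx_m)=\polstar(\translate_m^n(\stry)\mid\strx_n).
\]

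\textbf{Step 2: choose a consistent global parametrization.} Fix a base language, say $\lang_1$, and write every response as $\stry_k\defeq\translate_1^k(\stry_1)$ for $\stry_1\in\lang_1$, with $\stry_1$ itself for $k=1$. The cocycle identity is needed exactly here: it guarantees $\translate_m^n(\stry_m)=\stry_n$ for all $m\neq n$, so the transports between any two languages agree with this parametrization. Set $r_k(\stry_1)\defeq\piref(\stry_k\mid\strx_k)$.

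\textbf{Step 3: rewrite the optimum.} By \cref{prop:N-backward-kl-optimum},
\[
\log\polstar(\stry_m\mid\strx_m)=\log r_m+\sum_{n\neq m}u_mv_n\log r_n+\mathrm{const}_m .
\]
Since $\strength_{mm}=u_mv_m=1$, the leading term $\log r_m$ equals $u_mv_m\log r_m$. Hence
\[
\log\polstar(\stry_m\mid\strx_m)=u_m S(\stry_1)+\mathrm{const}_m,\qquad S\defeq\sum_{n=1}^N v_n\log r_n .
\]
The key observation is that $S$ does not depend on $m$. It follows that $\polstar(\stry_m\mid\strx_m)^{u_n}\propto\exp(u_mu_nS(\stry_1))$. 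By Step 1, $(\translate_n^m\pf\polstar\pf\translate_m^n)(\stry_m\mid\strx_m)^{u_m}=\polstar(\stry_n\mid\strx_n)^{u_m}\propto\exp(u_nu_mS(\stry_1))$.

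\textbf{Step 4: conclude.} The two unnormalized functions of $\stry_1$ coincide up to a constant. Since $\stry_1\mapsto\stry_m$ is a bijection onto $\lang_m$, the two annealed distributions are equal on $\lang_m$, and $D_f(\mu\|\mu)=|\mu|\,f(1)=0$.

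I expect the main obstacle to be bookkeeping rather than analysis. The first delicate point is using the cocycle identity to justify the single parametrization $S$. The second is that both annealings are well defined: the normalizer of $\exp(u_mu_nS)$ must be finite. This follows whenever $u_n\geq1$ (from the footnote on annealing), and otherwise it is the same implicit convergence assumption the paper makes for $T<1$. I would state that assumption explicitly rather than try to prove it.
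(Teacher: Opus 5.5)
Your proposal is correct and follows essentially the same route as the paper: write $\polstar$ via the closed form of \cref{prop:N-backward-kl-optimum}, collapse the round-trips using the deterministic bijections, use $u_m v_m = 1$ together with the cocycle identity to match the intermediate-language factors, and conclude that the two annealed distributions coincide. Your global potential $S=\sum_n v_n\log r_n$, which gives $\log\polstar(\cdot\mid\strx_m)=u_mS+\mathrm{const}_m$, is a tidier packaging of the paper's step of raising the two closed forms to the powers $u_n$ and $u_m$ and matching factors term by term.
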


\begin{proof}
First, note that $\strength_{mm} = u_m v_m = 1$ for $m \in [N]$.
For any two languages $\lang_{m}, \lang_{n}$ where $m \neq n$,
\begin{subequations}
\begin{align}\label{eq:proof-polstar-N-lang-tilt-i}
    \polstar(\stry_{m} \mid \strx_{m}) &\propto \prod_{\ell \ne m,n} \piref( \translate_{m}^{\ell}(\stry_{m}) \mid \translate_{m}^{\ell}(\strx_{m}))^{u_m \mathbf{v}_{\ell}} \cdot \piref( \translate_{m}^{n}(\stry_{m}) \mid \translate_{m}^{n}(\strx_{m}))^{u_m v_n} \cdot \piref(\stry_{m} \mid \strx_{m}), \\
    \polstar(\stry_{n} \mid \strx_{n}) &\propto \prod_{\ell \ne m,n} \piref( \translate_{n}^{\ell}(\stry_{n}) \mid \translate_{n}^{\ell}(\strx_{n}))^{u_n \mathbf{v}_{\ell}} \cdot \piref( \translate_{n}^{m}(\stry_{n}) \mid \translate_{n}^{m}(\strx_{n}))^{u_n v_m} \cdot \piref(\stry_{n} \mid \strx_{n}). \label{eq:proof-polstar-N-lang-tilt-j}
\end{align}
\end{subequations} %
Raising \cref{eq:proof-polstar-N-lang-tilt-i} to the $u_n^{\text{th}}$ power, \cref{eq:proof-polstar-N-lang-tilt-j} to $u_m^{\text{th}}$ power, we get
\begin{equation}
    \polstar(\stry_{m} \mid \strx_{m})^{u_n} = \polstar(\stry_{n} \mid \strx_{n})^{u_m} = \prod_{\ell \ne n,m} \piref( \translate_{m}^{\ell}(\stry_{m}) \mid \translate_{m}^{\ell}(\strx_{m}))^{u_m u_n \mathbf{v}_{\ell}}  \cdot \piref(\stry_{m} \mid \strx_{m})^{u_n} \piref(\stry_{n} \mid \strx_{n})^{u_m},
\end{equation}
using the fact that $ u_n v_n = u_m v_m = 1$ and $ \piref( \translate_{m}^{\ell}(\stry_{m}) \mid \translate_{m}^{\ell}(\strx_{m})) = \piref( \translate_{n}^{\ell}(\stry_{n}) \mid \translate_{n}^{\ell}(\strx_{n})) $. 
The latter equality holds by the cocycle identity: since $\stry_n = \translate_m^n(\stry_m)$ and $\strx_n = \translate_m^n(\strx_m)$, we have $\translate_n^\ell(\stry_n) = \translate_n^\ell(\translate_m^n(\stry_m)) = \translate_m^\ell(\stry_m)$, and likewise for $\strx$, so the intermediate-language factors ($\ell \ne m,n$) coincide.
Therefore, $D_f\bigl(\polstar(\cdot \mid \strx_m)^{u_{n}} \| (\translate_{n}^m \pf \polstar \pf \translate_m^{n})(\cdot \mid \strx_m)^{u_m} \bigr) = 0 $ for all $m \ne n$.

\end{proof}

\section{Implementation Details} \label{app:implementation}
In our experiments, we set the strength parameters $\strengthone = \strengthtwo = 1$ if not otherwise specified.
For all models, we use the AdamW optimizer with a learning rate of $1\text{e}^{-5}$, with an exception of $1\text{e}^{-6}$ for \modelname{Gemma} models on XCSQA to avoid overfitting\footnote{For the AdamW optimizer, we use a weight decay of $0$,  $\beta_1 = 0.9, \beta_2 = 0.999, \varepsilon=1\text{e}^{-8}$.}. 
All models are trained on four A100 GPUs of 40GB memory. %
For SFT, DPO, and CALM, the learning rate is set to $5\text{e}^{-7}$, $5\text{e}^{-6}$, and $5\text{e}^{-6}$, respectively.

\section{Dataset Details}
\label{sec:datasets}
Three datasets with parallel prompts and candidate responses are used in our experiments. Here we list the detailed statistics in \Cref{tab:dataset_statistic} and examples in \Cref{tab:dataset_examples}.

\begin{table*}
\centering
\resizebox{\textwidth}{!}{
\begin{tabular}{l l c c c c c l}
\toprule
\multirow{2}{*}{\bf Dataset} & \multirow{2}{*}{\bf Knowledge Type} & \multirow{2}{*}{\bf \#Langs} & \multicolumn{2}{c}{\bf Parallel} & \multirow{2}{*}{\bf \#Train} & \multirow{2}{*}{\bf \#Test} & \multirow{2}{*}{\bf Response Format} \\
\cmidrule(lr){4-5}
& & & Prompt & Candidate & & & \\
\midrule
MMMLU   & General Knowledge     & 12(+2) & \color{LimeGreen}\ding{51} & \color{LimeGreen}\ding{51} & 5000 & 9042 & A/B/C/D \\
XCSQA   & Commonsense Reasoning & 16     & \color{LimeGreen}\ding{51} & \color{LimeGreen}\ding{51} & 800  & 200  & A/B/C/D/E \\
BMLAMA  & Factual Association   & 17     & \color{LimeGreen}\ding{51} & \color{LimeGreen}\ding{51} & 5000 & 1792 & Words \\
\bottomrule
\end{tabular}
}
\caption{Statistics of MMMLU, XCSQA, and BMLAMA datasets.}
\vspace{-10pt}
\label{tab:dataset_statistic}
\end{table*}

\begin{table*}
\centering
\resizebox{0.99\linewidth}{!}{
\begin{tabular}{cp{7.0cm}m{5.0cm}c}
\toprule
\textbf{Dataset} & \textbf{Prompt} & \textbf{Candidates} & \textbf{Reference Completion} \\
\midrule
MMMLU & Which cells in the blood do not have a nucleus?

A. Lymphocyte

B. Monocyte

C. Erythrocyte

D. Basophil & [A, B, C, D] & C\\
\midrule
XCSQA & What might lock after someone drives in?

A. gate

B. doorknob

C. mouths

D. entrance

E. front door & [A, B, C, D, E] & A\\
\midrule
BMLAMA & Berlin is the capital of &
[Greenland, Piedmont, Oman, Venezuela, Fiji, Latvia, Taiwan, Norway, Romania, Germany] & Germany\\
\bottomrule
\end{tabular}
}
\caption{Examples of instances in MMMLU, XCSQA and BMLAMA.}
\vspace{-10pt}
\label{tab:dataset_examples}
\end{table*}

\paragraph{MMMLU.} 
MMMLU is a multilingual extension of the MMLU dataset \citep{hendrycks2021measuring} on general knowledge.
LMs are prompted with a factual query and candidate answers, and are expected to give a response from $\{ \text{A}, \text{B}, \text{C}, \text{D} \}$.
Each query and its candidate answers are given in 14 languages. 
In our experiments, we exclude Bengali due to the GPU constraint, and Swahili and Yoruba due to their low accuracy. 
We conduct extra experiments and in-depth analysis on Swahili and Yoruba in \Cref{sec:direction}.

\paragraph{XCSQA.} Similar to MMMLU, the prompts are multi-choice commonsense reasoning tasks over 16 languages.
The response space is also a set of capital letters: $\{ \text{A}, \text{B}, \text{C}, \text{D}, \text{E} \}$. 

\paragraph{BMLAMA.} The BMLAMA dataset \citep{qi-etal-2023-cross} specifically focuses on factual associations, and the response format is objective words rather than option letters, which promotes the best crosslingual knowledge alignment in our experiments.
Parallel prompts and candidate responses are provided across 17 languages.

\paragraph{Sampling training instances.} Regarding MMMLU and BMLAMA, we randomly sample two pairs of parallel candidate responses per prompt in the training set, yielding 5000 instances for DCO. Regarding XCSQA, we repeat this sampling procedure seven times to construct a dataset of comparable size ($800\cdot 7=5600$) for DCO training.

\section{Definition of RankC: Ranking-based Crosslingual Consistency}
\label{app:rankc}
RankC \citep{qi-etal-2023-cross} is a ranking-based consistency metric for assessing crosslingual knowledge consistency independently of probing accuracy. 
Given a pair of prompts $\bxone, \bxtwo$ that satisfy $\bxone = \transtwoone(\bxtwo)$,
assuming there are $M$ candidate responses $\{\boldsymbol{c}^{j}\}_{j=1}^{\ncand}$,
we sort the candidate responses in each language by descending likelihood into
$\candone^{1}, \candone^{2}, \dots, \candone^{\ncand} \in \langone$ and
$\candtwo^{1}, \candtwo^{2}, \dots, \candtwo^{\ncand} \in \langtwo$.
First, the `precision at $j$' (denoted $\patj$) is defined as the proportion of overlap among the top-\(j\) candidates in both languages:
$\patj = \frac{1}{j}  \bigl|\{\candone^{1}, \dots, \candone^{j}\} \cap \{\candtwo^{1}, \dots, \candtwo^{j}\}\bigr|$.
An exponentially decaying weight $w_j= \frac{\exp(\ncand - j)}{\sum_{k=1}^{\ncand} \exp(\ncand - k)}$ is multiplied to each $\patj$, so that agreements at smaller \(j\) (i.e., high-likelihood candidate responses) are rewarded more.
RankC is defined as
\begin{equation}
    \mathrm{RankC}(\bxone, \bxtwo) = \sum_{j=1}^{\ncand} w_j \cdot \patj.
\end{equation}

\section{Direction Controlling Results on English--Yoruba}
\label{app:en-yo}
The original accuracy on \langname{yo} is worse than \langname{sw}.
The \textbf{default} strength parameters induce decreases in both languages: \langname{en} drops by $-15.94$ and \langname{yo} also declines by $-1.18$. Making Yoruba `stable' (\langname{yo}-anchored; $\strengthone{=}10,\strengthtwo{=}0.1$) exacerbates the problem, especially pushing \langname{en} down to $-19.97$ while still not helping Yoruba accuracy ($\Delta=-1.22$). In contrast, the \langname{en}-anchored setup ($\strengthone{=}0.1,\strengthtwo{=}10$) delivers the best trade-off: \langname{en} accuracy remains less affected ($+2.98$) while that of Yoruba also improves by $+1.43$. Regarding CLC, for \langname{en}–\langname{yo}, the baseline of 45.67 increases to 57.16, 55.40, and 51.66, demonstrating the effectiveness of \methodname{} in crosslingual knowledge alignment.
As for the proportion of changed responses, similar to \langname{en}-\langname{sw}, \langname{en}-anchored minimizes \langname{en} updates as 19.00\% while allowing substantial revisions on Yoruba as 59.87\%. Default setup raises \langname{en} changes to 38.74\%, and \langname{yo}-anchored setup further increases \langname{en} changes to 42.51\%.

\begin{figure}[!h]
    \centering
    \includegraphics[width=\linewidth]{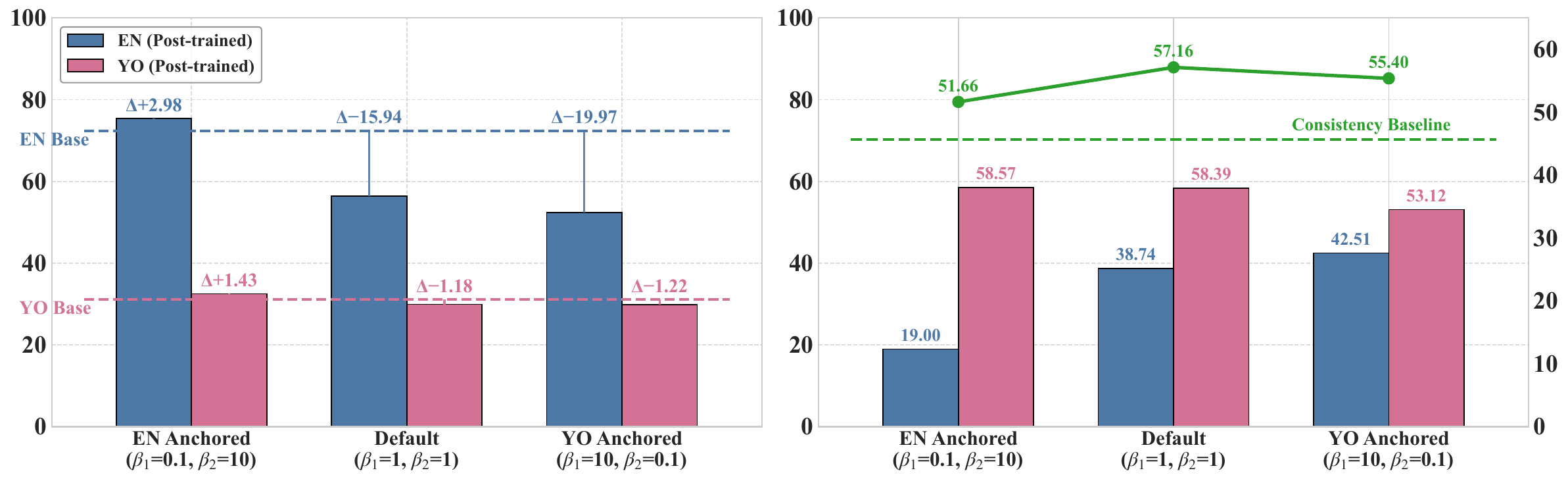}
    \caption{Left: Response accuracy after performing DCO on English--Yoruba. Right: Proportion of prompts for which the LM's response changes after DCO, with CLC values marked in green.
    }
    \vspace{-20pt}
    \label{fig:change_yo}
\end{figure}

\section{Guidance of Direction Controlling for Real-World Applications}
\label{app:direction_guidance}
Direction-controlling strength parameters matter: by adjusting $(\strengthone,\strengthtwo)$, we control the transfer strength for each side, thus deciding the direction of optimizing knowledge consistency. The low-resource languages Swahili and Yoruba were selected in our experiments for the best visibility of the effect; yet the results should not be misinterpreted as meaning `always set a large strength parameter $\strength$ for \langname{en}.' 
In fact, the principle is more general: anchor on the high-quality, high-priority language, which is \langname{en} in our study, but could be French or any other well-trained language, according to the specific requirements of the downstream LM application.
When both languages are of comparable quality, or when policy requires reciprocity, a symmetric schedule like 1:1 is expected to be optimal. In practice, $(\strengthone,\strengthtwo)$ can also be selected empirically against a small validation set. The `ratio of changed responses' (i.e., \Cref{fig:change_sw} (right) \& \Cref{fig:change_yo} (right)) is useful: 
if the intended stable language ($\langone$) exhibits excessive changes, or the language expected to shift shows little movement, reduce \(\strengthone\), and by construction, \(\strengthtwo\) will increase correspondingly (\(\strengthtwo=1/\strengthone\)).

Note that DCO is capable of improving CLC under any direction parameter setups, as demonstrated in \Cref{fig:change_sw} \& \Cref{fig:change_yo}. This empirical adjustment further yields gains in accuracy for both sides. \looseness=-1

\section{Standard Deviation and Detailed CLC Results in the Multilingual Experiments}
\label{app:joint-full-results-with-std}
\begin{table*}
\setlength{\tabcolsep}{2.0 pt}
\small
\centering
\resizebox{.99\textwidth}{!}{
\begin{tabular}{l r r r  r r r  r r r  r r r  r r r}
\toprule
\multirow{2}{*}{\bf Method} 
  & \multicolumn{3}{c}{\modelname{Qwen2.5-14B}} 
  & \multicolumn{3}{c}{\modelname{Gemma3-12B-pt}}
  & \multicolumn{3}{c}{\modelname{Qwen3-14B}}
  & \multicolumn{3}{c}{\modelname{Aya-Expanse-8B}}
  & \multicolumn{3}{c}{\modelname{Llama3.1-8B}} \\
\cmidrule(lr){2-4} \cmidrule(lr){5-7} \cmidrule(lr){8-10} \cmidrule(lr){11-13} \cmidrule(lr){14-16}
& \clcall{} & \accen{} & \accnon{} 
   & \clcall{} & \accen{} & \accnon{} 
   & \clcall{} & \accen{} & \accnon{} 
  & \clcall{} & \accen{} & \accnon{} 
   & \clcall{} & \accen{} & \accnon{} \\
\midrule
Base 
    & $68.6 \pm 0.02$  & $72.5 \pm 0.4$  & $58.1 \pm 0.2$ 
    & $73.6 \pm 0.02$  & $70.1 \pm 0.4$  & $62.3 \pm 0.2$ 
    & $76.1 \pm 0.02$  & $76.6 \pm 0.4$  & $67.3 \pm 0.1$ 
    & $72.2 \pm 0.02$  & $59.8 \pm 0.5$  & $52.9 \pm 0.2$ 
    & $60.9 \pm 0.02$  & $57.3 \pm 0.5$  & $45.8 \pm 0.2$ \\ 
\addlinespace
+ SFT$^*$ 
     & $+0.6 \pm 0.02$ & $+1.5 \pm 0.4$ & $+6.7 \pm 0.2$
     & $+0.6 \pm 0.02$ & $+0.7 \pm 0.4$ & $+1.6 \pm 0.2$
     & $-0.2 \pm 0.02$ & $+0.1 \pm 0.5$ & $+0.5 \pm 0.1$
     & $+3.5 \pm 0.02$ & $+0.7 \pm 0.5$ & $+0.5 \pm 0.2$
     & $+4.3 \pm 0.02$ & $+6.7 \pm 0.5$ & $+5.9 \pm 0.2$ \\ 
\addlinespace
+ DPO$^*$
     &$+12.3 \pm 0.02$  & $+7.8 \pm 0.4$ &$+13.9 \pm 0.1$
     & $+6.5 \pm 0.02$  & $+1.8 \pm 0.4$ & $+3.4 \pm 0.1$
     & $+3.0 \pm 0.02$  & $+2.7 \pm 0.5$ & $+4.2 \pm 0.1$
     & $+1.3 \pm 0.02$  & $+2.5 \pm 0.5$ & $+2.5 \pm 0.2$
     &$+10.1 \pm 0.02$  & $+8.0 \pm 0.4$ & $+8.8 \pm 0.2$ \\ 
\addlinespace
\makecell[l]{\hspace{0.5em} + DCO$^*$}  
     & $+13.1 \pm 0.02$ & $+7.6 \pm 0.4$ & $+13.5 \pm 0.1$
     & $+10.2 \pm 0.02$ & $+1.2 \pm 0.4$ & $+2.9 \pm 0.1$
     & $+4.4 \pm 0.02$ &  $+2.8 \pm 0.5$ & $+4.3 \pm 0.1$
     & $+3.1 \pm 0.02$ &  $+2.7 \pm 0.5$ & $+2.6  \pm 0.2$ 
     & $+13.8 \pm 0.02$ & $+7.3 \pm 0.4$ & $+8.9  \pm 0.2$ \\
\addlinespace
+ CALM
     & $+4.2 \pm 0.02$ & $+0.0 \pm 0.4$ & $+4.1 \pm 0.2$
     & $+3.0 \pm 0.02$ & $-0.4 \pm 0.4$ & $-0.0 \pm 0.2$ 
     & $+0.3 \pm 0.02$ & $-2.1 \pm 0.5$ & $-1.1 \pm 0.1$
     & $+1.4 \pm 0.02$ & $-2.2 \pm 0.5$ & $-2.1 \pm 0.2$
     & $+3.0 \pm 0.02$ & $-2.2 \pm 0.5$ & $-5.0 \pm 0.2$ \\
\addlinespace
+ \methodname 
     &$+10.6 \pm 0.02$ & $+4.0 \pm 0.4$ & $+9.6 \pm 0.2$
     & $+6.5 \pm 0.02$  & $+0.9 \pm 0.4$ & $+2.5 \pm 0.2$
     & $+2.7 \pm 0.02$  & $+0.4 \pm 0.5$ & $+1.3 \pm 0.1$
     & $+5.3 \pm 0.02$  & $+0.5 \pm 0.5$ & $+0.5 \pm 0.2$
     & $+9.4 \pm 0.02$  & $+7.5 \pm 0.5$ & $+7.6 \pm 0.2$\\
\bottomrule
\end{tabular}
}
\caption{Comparison with previous methods in the joint training setup, on the MMMLU dataset.
\clcall{}: average crosslingual consistency (measured by RankC) between all language pairs; 
\accen{}/\accnon{}: average accuracy on English/non-English instances. We report mean scores with their standard errors. Methods with $^{*}$ are trained on reference completions.
\label{tab:comparison-app}
}
\end{table*}
We present the standard deviation of the joint training experiment in \Cref{app:joint-full-results-with-std}.
We visualize the improvement of CLC between all language pairs in \Cref{fig:heatmap_qwen2.5} to \Cref{fig:heatmap_llama3.1}. The left sub-figure in each panel reports the baseline CLC scores, while the right sub-figure shows the absolute change after applying \methodname{}. Warmer colors indicate higher CLC scores, and the delta plots highlight systematic gains across most language pairs. 
Notably, \methodname{} consistently improves CLC, with substantial gains not only between typologically similar languages such as English--Spanish, but also between distant ones. For instance, Arabic–Chinese and Hindi--Japanese improve by 15\% and 13\%, respectively, while the Korean--French pair gains a remarkable 14\% increase on Llama-3.1-8B. 
These results indicate that \methodname{} not only raises overall knowledge consistency but also narrows the gap between distant language families.

\begin{figure}[!t]
    \centering
    \includegraphics[width=0.9\linewidth]{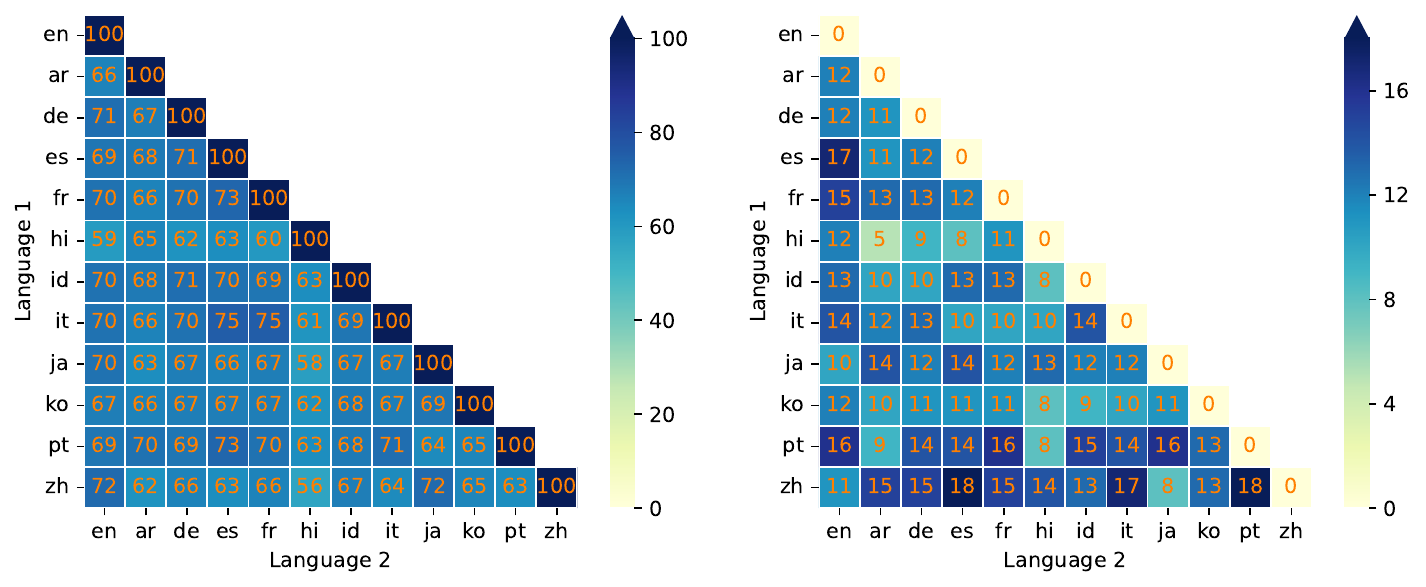}
    \caption{The changes in CLC of \modelname{Qwen2.5-14B} after DCO. Left: CLC between all language pairs on the original model. Right: The improvements in CLC of the post-DCO model.}
    \vspace{-10pt}
    \label{fig:heatmap_qwen2.5}
\end{figure}

\begin{figure}[!t]
    \centering
    \includegraphics[width=0.9\linewidth]{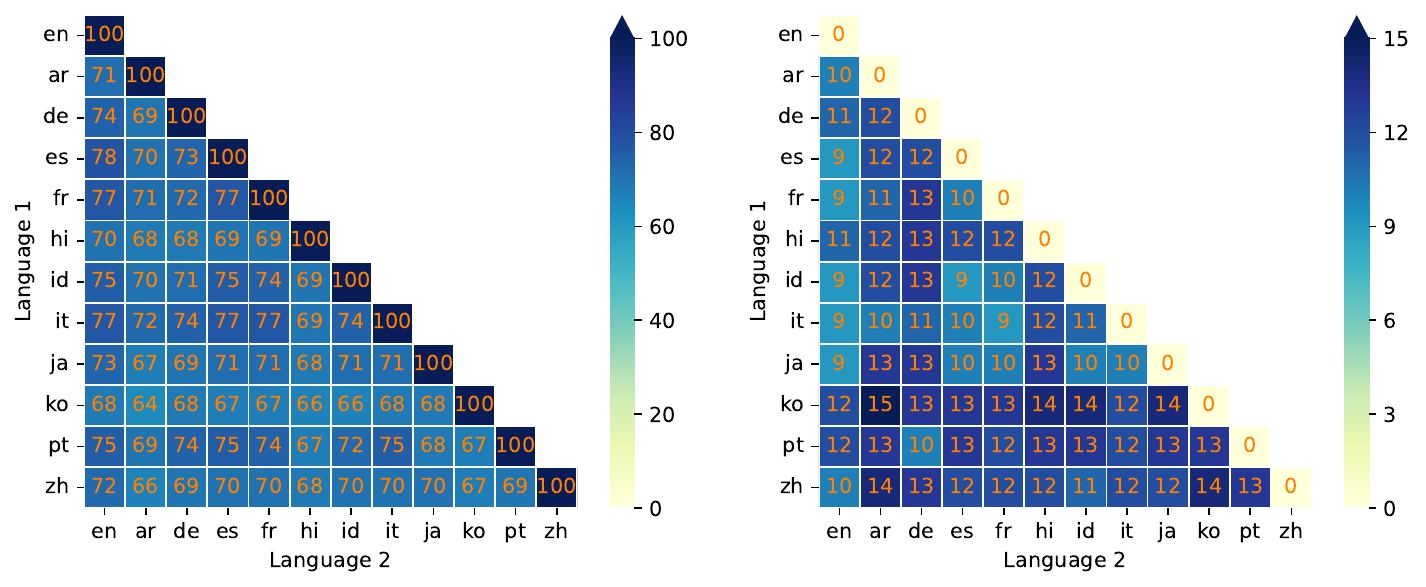}
    \caption{The changes in CLC of \modelname{Gemma3-12B} after DCO. Left: CLC between all language pairs on the original model. Right: The Improvements in CLC of the post-DCO model.}
    \vspace{-10pt}
    \label{fig:heatmap_gemma3}
\end{figure}

\begin{figure}[!t]
    \centering
    \includegraphics[width=0.9\linewidth]{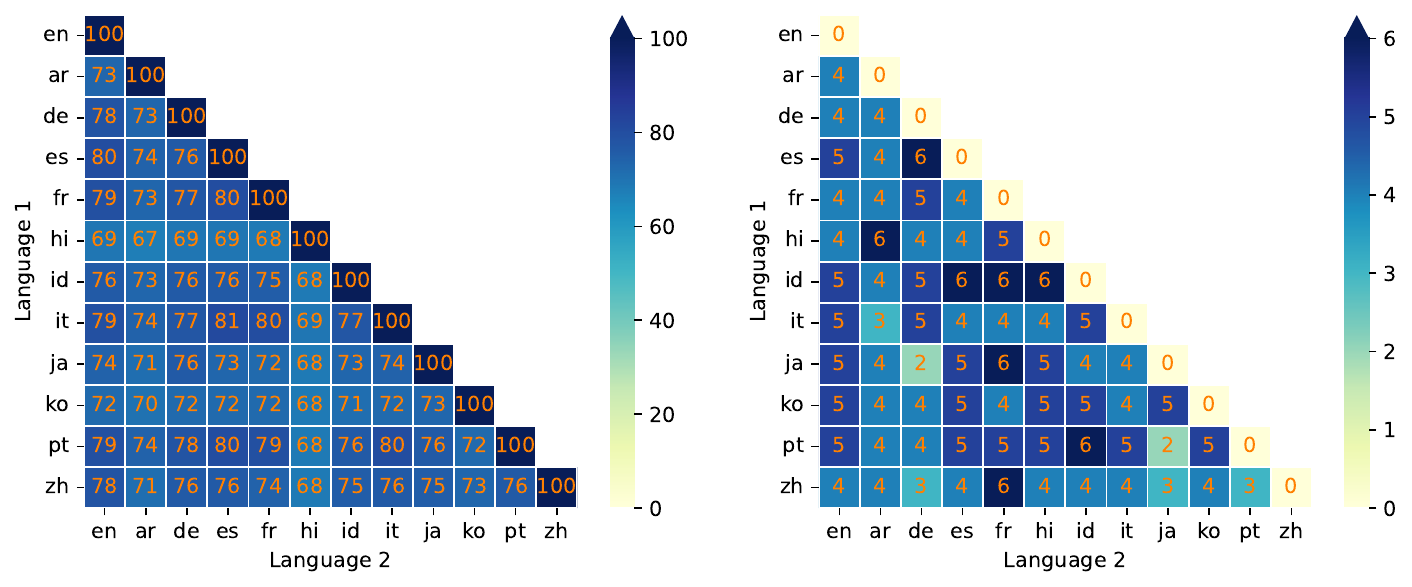}
    \caption{The changes in CLC of \modelname{Qwen3-14B} after DCO. Left: CLC between all language pairs on the original model. Right: The Improvements in CLC of the post-DCO model.}
    \vspace{-10pt}
    \label{fig:heatmap_qwen3}
\end{figure}

\begin{figure}[!h]
    \centering
    \includegraphics[width=0.9\linewidth]{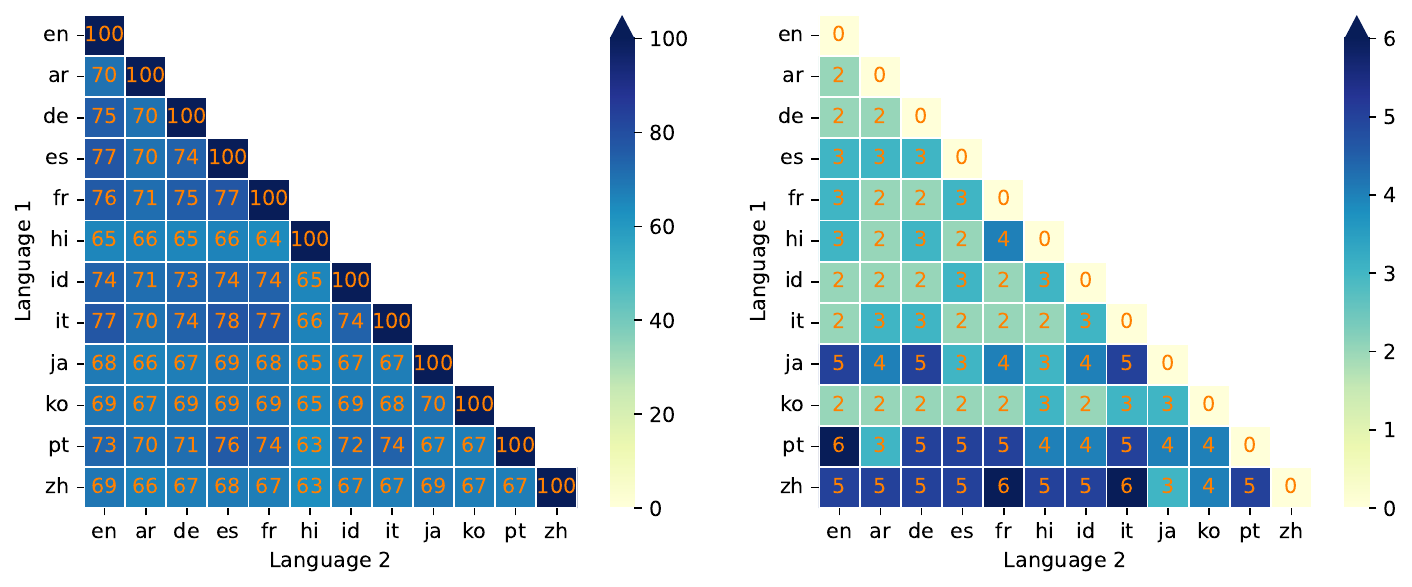}
    \caption{The changes in CLC of \modelname{Aya-Expanse-8B} after DCO. Left: CLC between all language pairs on the original model. Right: The Improvements in CLC of the post-DCO model.}
    \vspace{-10pt}
    \label{fig:heatmap_aya}
\end{figure}

\begin{figure}[!t]
    \centering
    \includegraphics[width=0.9\linewidth]{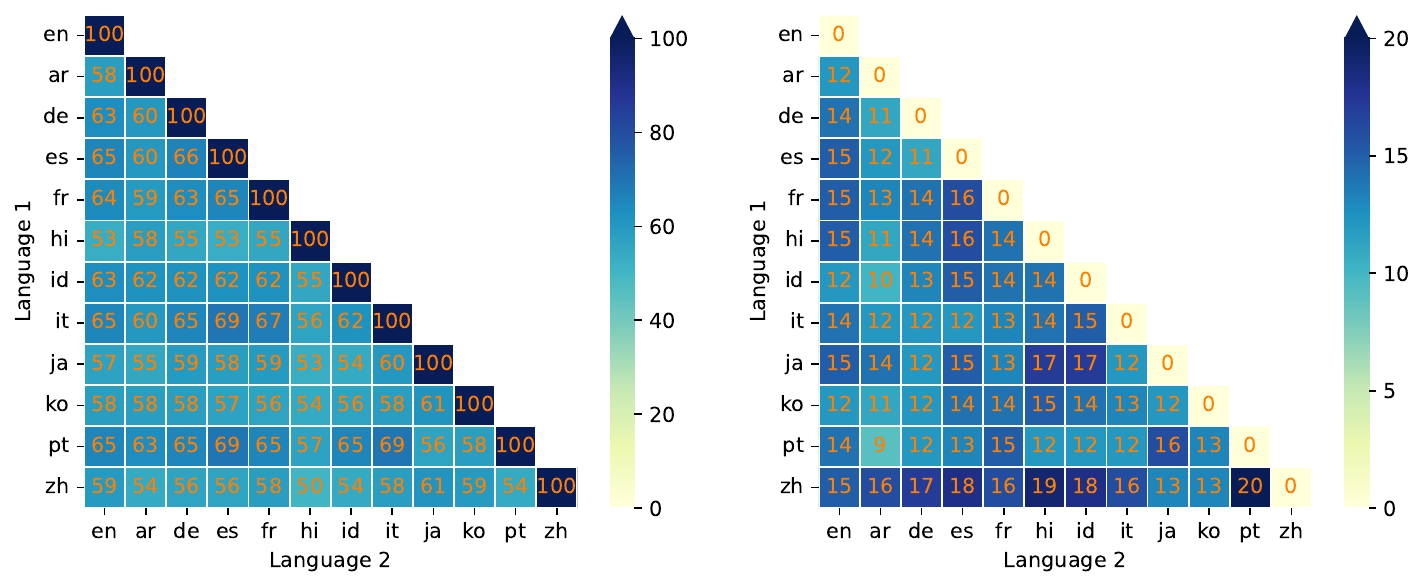}
    \caption{The changes in CLC of \modelname{Llama3.1-8B} after DCO. Left: CLC between all language pairs on the original model. Right: The Improvements in CLC of the post-DCO model.}
    \vspace{-10pt}
    \label{fig:heatmap_llama3.1}
\end{figure}

\section{Full Bilingual Experimental Results} \label{app:bilingual-full-results}

We present the detailed results of the bilingual experiments in \Crefrange{tab:mmmlu-consistency}{tab:bmlama-accuracy}

\section{Bilingual Knowledge Alignment Beyond English}
\label{app:non_english_alignment}
While our main experiments instantiate a bilingual DCO that aligns English with a single non-English language (motivated by English having the highest probing accuracy in our preliminary study), it is important to verify that the effect is not specific to using English as an anchor. Therefore, we evaluate DCO on \modelname{Qwen2.5-14B} with non-English languages in BMLAMA, covering both distant and less-distant pairs:
(i) Arabic–Chinese (\langname{ar}–\langname{zh}) (typologically distant),
(ii) Korean–French (\langname{ko}–\langname{fr}) (typologically distant),
(iii) English–Spanish (\langname{en}–\langname{es}) (closer pair for reference).

\begin{table}[!t]
\centering
\begin{tabular}{lccccccccc}
\toprule
\multirow{2}{*}{Model} & \multicolumn{3}{c}{\langname{en}-\langname{es}} & \multicolumn{3}{c}{\langname{ar}-\langname{zh}} & \multicolumn{3}{c}{\langname{ko}-\langname{fr}}  \\
\cmidrule(lr){2-4} \cmidrule{5-7} \cmidrule{8-10}
& Acc \langname{en} & Acc \langname{es} & CLC & Acc \langname{ar} & Acc \langname{zh} & CLC & Acc \langname{ko} & Acc \langname{fr} & CLC\\
\midrule
\modelname{Qwen2.5-14B} & 62.67 & 44.81 & 47.45 & 42.35 & 37.05 & 36.34 & 40.57 & 36.71 & 37.29\\
+DCO & \textbf{68.97} & \textbf{57.37} & \textbf{61.80} & \textbf{50.33} & \textbf{48.49} & \textbf{46.74} & \textbf{52.46} & \textbf{53.68} & \textbf{52.24} \\
\bottomrule
\end{tabular}
\caption{Non-English alignment results on BMLAMA.}
\vspace{-10pt}
\label{tab:non-EN-alignment}
\end{table}

The results in \Cref{tab:non-EN-alignment} show that DCO improves CLC and accuracy across all tested language pairs, including typologically distant ones. In particular, CLC increases by +14.35\% (\langname{en}–\langname{es}), +10.40\% (\langname{ar}–\langname{zh}), and +14.95\% (\langname{ko}–\langname{fr}), while both languages’ accuracies rise in every pair. Although distant pairs start from lower baseline consistency (expected due to translation/linguistic gaps), DCO still delivers substantial gains, especially between Korean and French, suggesting its effectiveness is not dependent on typological similarity.

\begin{table}[!h]
\centering
\resizebox{\textwidth}{!}{
\begin{tabular}{l|rrrrrrrrrrr}
\toprule
& \langname{ar} & \langname{de} & \langname{es} & \langname{fr} & \langname{hi} & \langname{id} & \langname{it} & \langname{ja} & \langname{ko} & \langname{pt} & \langname{zh} \\
\midrule
\rowcolor{black!10} \multicolumn{12}{c}{\bf Consistency} \\
\midrule
\modelname{Qwen2.5-7B} & 64.18 & 70.68 & 74.60 & 73.33 & 50.95 & 69.20 & 73.46 & 68.05 & 65.63 & 72.40 & 72.38 \\
+ \methodname & +9.17 & +9.20 & +6.68 & +8.88 & +14.66 & +7.42 & +7.50 & +8.16 & +8.01 & +9.99 & +7.66 \\
\modelname{Qwen2.5-14B} & 66.19 & 70.93 & 69.45 & 70.19 & 59.15 & 70.43 & 69.68 & 70.16 & 67.02 & 68.54 & 72.38 \\
+\methodname & +12.09 & +12.82 & +14.68 & +14.11 & +12.98 & +11.04 & +15.66 & +9.15 & +10.80 & +15.76 & +9.54 \\
\midrule
\modelname{Gemma3-4B-pt} & 63.21 & 68.24 & 71.78 & 71.32 & 56.84 & 64.48 & 70.53 & 58.28 & 60.73 & 69.00 & 60.75 \\
+\methodname & +10.69 & +10.07 & +9.40 & +8.11 & +18.41 & +11.22 & +8.73 & +14.09 & +11.32 & +4.85 & +14.08 \\
\modelname{Gemma3-12B-pt} & 70.56 & 73.89 & 77.93 & 76.81 & 69.82 & 75.23 & 76.87 & 72.96 & 68.13 & 74.86 & 72.11 \\
+\methodname & +7.37 & +8.44 & +4.64 & +4.58 & +9.07 & +8.41 & +6.11 & +6.79 & +9.91 & +7.93 & +5.45 \\
\midrule
\modelname{Qwen3-8B} & 65.17 & 71.58 & 75.31 & 72.37 & 65.16 & 68.52 & 74.85 & 71.64 & 67.00 & 74.85 & 76.37 \\
+\methodname & +9.48 & +9.77 & +7.17 & +9.10 & +4.80 & +10.61 & +7.43 & +5.80 & +7.68 & +7.73 & +2.99 \\
\modelname{Qwen3-14B} & 72.70 & 77.84 & 80.34 & 78.90 & 68.56 & 76.17 & 79.38 & 74.06 & 72.46 & 79.30 & 77.72 \\
+\methodname & +3.73 & +4.83 & +4.50 & +4.86 & +4.08 & +5.29 & +4.66 & +6.03 & +4.89 & +5.10 & +4.75 \\
\midrule
\modelname{Aya-Expanse-8B} & 69.66 & 74.58 & 77.42 & 76.26 & 65.44 & 73.74 & 76.70 & 68.45 & 68.69 & 73.49 & 69.39 \\
+\methodname & +5.37 & +4.70 & +4.01 & +5.17 & +5.23 & +4.17 & +4.86 & +7.41 & +5.04 & +6.59 & +6.06 \\
\midrule
\modelname{Llama3.1-8B} & 58.31 & 62.70 & 65.23 & 63.83 & 52.67 & 62.54 & 65.38 & 56.91 & 57.59 & 65.15 & 59.31 \\
+\methodname & +10.75 & +10.94 & +12.57 & +13.89 & +14.88 & +7.99 & +6.21 & +14.50 & +11.07 & +13.73 & +16.14 \\
\modelname{Llama3.2-3B} & 46.07 & 54.38 & 54.06 & 53.65 & 42.78 & 51.77 & 53.25 & 45.11 & 42.75 & 56.23 & 47.82 \\
+ \methodname & +16.54 & +12.60 & +16.36 & +17.28 & +18.01 & +16.39 & +16.53 & +17.38 & +17.27 & +17.03 & +18.97 \\
\bottomrule
\end{tabular}
}
\caption{Consistency improvements with English for each language across all models on MMMLU.}
\label{tab:mmmlu-consistency}
\end{table}

\begin{table*}[!h]
\centering
\resizebox{\textwidth}{!}{
\begin{tabular}{l|r|rrrrrrrrrrrr}
\toprule
& \langname{en} & \langname{ar} & \langname{de} & \langname{es} & \langname{fr} & \langname{hi} & \langname{id} & \langname{it} & \langname{ja} & \langname{ko} & \langname{pt} & \langname{zh} \\
\midrule
\rowcolor{black!10} \multicolumn{13}{c}{\bf Accuracy} \\
\midrule
\modelname{Qwen2.5-7B} & 69.57 & 54.51 & 59.98 & 63.43 & 62.30 & 44.33 & 58.71 & 62.66 & 59.07 & 57.13 & 61.93 & 63.98 \\
+ \methodname & -0.01 & +1.24 & +2.02 & +0.72 & +1.46 & +4.32 & +1.80 & +1.76 & +1.74 & +1.44 & +1.97 & +1.50 \\
\modelname{Qwen2.5-14B} & 72.46 & 54.88 & 59.52 & 56.83 & 58.31 & 45.66 & 61.06 & 56.21 & 63.24 & 56.96 & 56.35 & 69.87 \\
+\methodname & +1.64 & +7.63 & +8.51 & +12.94 & +10.67 & +8.41 & +6.31 & +12.57 & +4.16 & +8.00 & +13.38 & +0.83 \\
\midrule
\modelname{Gemma3-4B-pt} & 56.09 & 43.14 & 49.02 & 52.51 & 50.51 & 39.64 & 48.47 & 49.64 & 42.78 & 45.39 & 50.25 & 46.56 \\
+\methodname & +0.08 & +1.04 & +2.77 & -0.02 & +0.26 & +5.32 & +2.93 & +1.60 & +4.64 & +2.13 & +1.81 & +3.07 \\
\modelname{Gemma3-12B-pt} & 70.07 & 58.60 & 64.19 & 65.43 & 64.14 & 57.94 & 62.07 & 65.05 & 60.83 & 60.49 & 65.01 & 61.36 \\
+\methodname & -0.90 & +1.71 & +1.22 & -0.70 & +0.73 & +4.44 & +1.39 & +0.54 & +1.46 & +1.51 & +1.28 & +1.33 \\
\midrule
\modelname{Qwen3-8B} & 71.15 & 56.34 & 63.41 & 65.60 & 64.58 & 54.29 & 60.99 & 64.78 & 61.32 & 59.47 & 64.11 & 66.29 \\
+\methodname & +0.72 & +3.03 & +2.69 & +2.08 & +1.84 & +1.66 & +2.89 & +1.94 & +1.42 & +1.74 & +2.87 & +0.31 \\
\modelname{Qwen3-14B} & 76.58 & 63.07 & 68.56 & 71.04 & 70.10 & 58.69 & 67.49 & 70.35 & 65.65 & 64.74 & 70.72 & 69.64 \\
+\methodname & +0.13 & +0.80 & +2.36 & +1.38 & +1.41 & +1.98 & +2.09 & +1.50 & +2.10 & +1.52 & +1.68 & +1.55 \\
\midrule
\modelname{Aya-Expanse-8B} & 59.76 & 50.75 & 54.10 & 56.36 & 55.71 & 46.20 & 53.14 & 54.62 & 51.69 & 51.32 & 55.63 & 52.57 \\
+\methodname & +0.52 & +0.46 & -0.01 & -0.28 & -0.03 & +0.96 & +0.57 & +0.66 & +1.37 & +0.01 & +0.72 & +0.68 \\
\midrule
\modelname{Llama3.1-8B} & 57.27 & 41.19 & 49.13 & 51.17 & 47.56 & 35.01 & 46.88 & 49.89 & 44.63 & 43.10 & 48.71 & 46.40 \\
+\methodname & +0.74 & +3.33 & +3.21 & +4.16 & +5.07 & +0.23 & +3.39 & +0.32 & +2.12 & +2.91 & +4.82 & +3.54 \\
\modelname{Llama3.2-3B} & 52.23 & 35.45 & 43.86 & 46.25 & 44.67 & 33.92 & 42.97 & 43.80 & 34.62 & 36.16 & 45.07 & 38.41 \\
+ \methodname & +0.93 & +4.64 & +2.48 & +1.75 & +2.91 & +3.17 & +1.66 & +2.83 & +5.96 & +4.42 & +2.16 & +4.90 \\
\bottomrule
\end{tabular}
}
\caption{Accuracy of each model on MMMLU across all languages after DCO.
}
\vspace{-10pt}
\label{tab:mmmlu-accuracy}
\end{table*}

\begin{table*}[!h]
\centering
\resizebox{\textwidth}{!}{
\begin{tabular}{l|rrrrrrrrrrrrrrr}
\toprule
& \langname{zh} & \langname{de} & \langname{es} & \langname{fr} & \langname{it} & \langname{ja} & \langname{nl} & \langname{pl} & \langname{pt} & \langname{ru} & \langname{ar} & \langname{vi} & \langname{hi} & \langname{sw} & \langname{ur} \\
\midrule
\rowcolor{black!10} \multicolumn{16}{c}{\bf Consistency} \\
\midrule
\modelname{Qwen2.5-7B} & 64.41 & 72.26 & 77.82 & 71.57 & 71.96 & 57.78 & 65.49 & 65.41 & 75.27 & 66.67 & 65.49 & 68.80 & 49.76 & 35.58 & 48.02 \\
+ \methodname & +10.81 & +4.33 & +4.50 & +6.55 & +6.04 & +8.78 & +9.73 & +6.33 & +5.97 & +6.34 & +8.33 & +6.32 & +8.25 & -0.28 & +7.83 \\
\midrule
\modelname{Qwen2.5-14B} & 66.94 & 70.90 & 73.82 & 71.60 & 73.58 & 64.55 & 68.82 & 65.03 & 74.13 & 64.31 & 64.86 & 64.41 & 53.68 & 39.50 & 52.57 \\
+ \methodname & +8.03 & +5.80 & +8.21 & +5.93 & +3.81 & +5.92 & +8.52 & +8.34 & +6.52 & +8.73 & +11.00 & +10.68 & +5.33 & +0.40 & +4.90 \\
\midrule
\modelname{Gemma3-4B-pt} & 54.23 & 60.91 & 57.85 & 56.83 & 60.28 & 50.21 & 56.93 & 53.29 & 57.68 & 57.18 & 53.50 & 52.78 & 51.56 & 37.36 & 47.26 \\
+ \methodname & +4.76 & +3.34 & +2.91 & +3.56 & +4.05 & +7.43 & +5.72 & +4.87 & +4.18 & +4.31 & +6.91 & +6.42 & +6.95 & +9.48 & +10.98 \\
\midrule
\modelname{Gemma3-12B-pt} & 64.61 & 65.09 & 64.18 & 61.80 & 63.32 & 55.11 & 62.31 & 56.16 & 65.07 & 58.16 & 50.77 & 56.15 & 54.15 & 46.19 & 51.06 \\
+ \methodname & +3.40 & +3.27 & +6.15 & +4.54 & +4.88 & +3.51 & +4.99 & +5.94 & +1.54 & +5.76 & +7.04 & +5.72 & +3.35 & +0.65 & +8.45 \\
\midrule
\modelname{Qwen3-8B} & 64.88 & 65.84 & 67.75 & 68.77 & 70.95 & 58.49 & 64.60 & 63.01 & 69.41 & 60.12 & 61.89 & 65.46 & 55.63 & 37.42 & 49.25 \\
+ \methodname & +4.42 & +6.58 & +8.88 & +5.64 & +4.60 & +4.19 & +7.06 & +6.84 & +6.63 & +7.25 & +10.64 & +5.01 & +4.52 & +5.19 & +5.69 \\
\midrule
\modelname{Qwen3-14B} & 66.18 & 66.33 & 66.11 & 67.36 & 70.93 & 60.14 & 63.83 & 61.64 & 67.19 & 65.90 & 64.08 & 64.07 & 55.00 & 37.59 & 52.26 \\
+ \methodname & +4.36 & +8.36 & +12.06 & +7.23 & +3.29 & +6.19 & +9.93 & +8.55 & +8.70 & +4.54 & +5.40 & +7.32 & +6.53 & +4.88 & +9.70 \\
\midrule
\modelname{Aya-Expanse-8B} & 66.36 & 69.33 & 72.63 & 71.18 & 69.02 & 57.02 & 67.24 & 64.38 & 65.18 & 64.96 & 65.29 & 65.77 & 59.01 & 35.93 & 45.34 \\
+ \methodname & +4.79 & +4.90 & +6.43 & +5.29 & +6.59 & +6.92 & +7.01 & +8.14 & +11.86 & +3.99 & +6.21 & +4.15 & +6.83 & +4.19 & +8.07 \\
\midrule
\modelname{Llama3.1-8B} & 59.24 & 67.62 & 67.08 & 67.90 & 65.36 & 56.66 & 65.25 & 63.13 & 68.35 & 60.26 & 58.52 & 64.64 & 55.40 & 36.92 & 47.12 \\
+ \methodname & +11.29 & +3.89 & +12.32 & +5.15 & +8.13 & +6.75 & +5.51 & +5.56 & +4.56 & +9.53 & +6.21 & +2.45 & +14.60 & +24.20 & +16.28 \\
\midrule
\modelname{Llama3.2-3B} & 57.52 & 64.49 & 65.19 & 62.84 & 61.03 & 53.57 & 59.29 & 54.31 & 61.62 & 60.54 & 59.98 & 57.48 & 52.58 & 35.28 & 45.53 \\
+ \methodname & +10.30 & +6.09 & +8.24 & +10.39 & +8.24 & +8.28 & +3.54 & +9.87 & +9.23 & +4.94 & -1.01 & +10.97 & +7.92 & +5.51 & +8.68 \\
\bottomrule
\end{tabular}
}
\caption{Consistency improvements with English for each language across all models on XCSQA.}
\vspace{-10pt}
\label{tab:xcsqa-consistency}
\end{table*}

\begin{table*}[!h]
\centering
\resizebox{\textwidth}{!}{
\begin{tabular}{l|r|rrrrrrrrrrrrrrr}
\toprule
 & \langname{en} & \langname{zh} & \langname{de} & \langname{es} & \langname{fr} & \langname{it} & \langname{ja} & \langname{nl} & \langname{pl} & \langname{pt} & \langname{ru} & \langname{ar} & \langname{vi} & \langname{hi} & \langname{sw} & \langname{ur} \\
\midrule
\rowcolor{black!10} \multicolumn{17}{c}{\bf Accuracy} \\
\midrule
\modelname{Qwen2.5-7B} & 84.00 & 55.50 & 58.00 & 66.00 & 60.50 & 62.00 & 52.50 & 54.50 & 57.50 & 63.00 & 55.50 & 53.50 & 61.50 & 39.50 & 25.00 & 41.00 \\
+ \methodname & -1.93 & +7.50 & +5.00 & +1.50 & +2.50 & +5.00 & +4.50 & +5.00 & +6.50 & +4.50 & +3.00 & +5.00 & +1.50 & +7.00 & 0.00 & +4.00 \\
\modelname{Qwen2.5-14B} & 87.00 & 59.50 & 62.00 & 65.00 & 62.50 & 65.00 & 57.50 & 61.00 & 60.00 & 67.50 & 56.00 & 54.50 & 56.50 & 46.50 & 32.50 & 47.00 \\
+ \methodname & -2.53 & +7.00 & +5.00 & +5.00 & +6.00 & +2.00 & +5.00 & +5.00 & +3.00 & +4.00 & +5.00 & +8.00 & +10.00 & +1.50 & 0.00 & +3.50 \\
\midrule
\modelname{Gemma3-4B-pt} & 56.50 & 35.50 & 41.00 & 45.00 & 45.00 & 42.50 & 32.00 & 40.00 & 39.00 & 34.50 & 39.00 & 36.50 & 43.00 & 30.50 & 27.00 & 31.00 \\
+ \methodname & +2.30 & +5.50 & +3.50 & +0.50 & +0.50 & +4.00 & +3.50 & +4.00 & +1.00 & +3.50 & -0.50 & +2.50 & 0.00 & +2.50 & +1.50 & +3.50 \\
\modelname{Gemma3-12B-pt} & 66.00 & 52.50 & 52.50 & 53.50 & 52.00 & 56.50 & 43.00 & 53.00 & 45.50 & 53.50 & 43.00 & 39.50 & 46.50 & 40.00 & 37.50 & 40.00 \\
+ \methodname & +0.10 & +4.00 & +3.50 & +2.50 & +4.00 & +1.00 & +1.50 & +1.50 & +5.00 & +2.50 & +6.00 & +6.00 & +5.00 & +2.00 & +1.50 & +7.50 \\
\midrule
\modelname{Qwen3-8B} & 73.50 & 53.00 & 53.50 & 56.50 & 61.00 & 60.50 & 49.00 & 55.50 & 53.50 & 57.00 & 51.50 & 50.50 & 56.50 & 47.00 & 28.00 & 41.50 \\
+ \methodname & +0.97 & +4.50 & +2.50 & +3.50 & -1.00 & +0.50 & -1.00 & +1.50 & +3.00 & +5.50 & +2.50 & +5.50 & +1.00 & +2.50 & +1.00 & +3.50 \\
\modelname{Qwen3-14B} & 77.50 & 55.00 & 58.00 & 60.00 & 57.00 & 62.50 & 51.50 & 59.00 & 57.00 & 59.00 & 57.00 & 55.50 & 60.50 & 47.50 & 27.50 & 43.00 \\
+ \methodname & +1.07 & +3.50 & +4.00 & +5.00 & +5.00 & +2.50 & +4.50 & +2.50 & +5.00 & +4.00 & +1.00 & +1.50 & +2.50 & +1.00 & +5.50 & +9.00 \\
\midrule
\modelname{Aya-Expanse-8B} & 78.00 & 61.00 & 59.00 & 64.00 & 58.00 & 62.00 & 49.50 & 59.50 & 57.00 & 60.00 & 58.50 & 56.50 & 59.50 & 50.50 & 25.00 & 36.00 \\
+ \methodname & +0.57 & +4.00 & +2.50 & +5.00 & +4.50 & +3.00 & +5.50 & +1.00 & +4.00 & +7.00 & -2.00 & +2.50 & +4.00 & +3.50 & +3.50 & +7.00 \\
\midrule
\modelname{Llama3.1-8B} & 67.50 & 48.00 & 55.50 & 54.50 & 55.00 & 56.50 & 41.00 & 52.00 & 51.00 & 53.00 & 48.50 & 47.00 & 55.50 & 39.50 & 27.00 & 32.00 \\
+ \methodname & +0.17 & +3.50 & 0.00 & +0.50 & +0.50 & -1.00 & +3.00 & -0.50 & +0.50 & +1.00 & +2.00 & -1.50 & +0.50 & +5.00 & +2.00 & +3.50 \\
\modelname{Llama3.2-3B} & 65.00 & 51.00 & 47.50 & 46.50 & 42.50 & 45.50 & 45.00 & 39.50 & 41.50 & 45.50 & 44.00 & 45.00 & 43.00 & 35.00 & 29.00 & 34.50 \\
+ \methodname & -4.07 & -3.50 & +3.00 & +8.50 & +6.00 & +3.50 & +4.00 & +2.50 & +8.50 & +3.50 & +2.00 & -1.00 & +8.00 & +4.50 & +1.50 & +4.00 \\
\bottomrule
\end{tabular}
}
\caption{Accuracy of each model on XCSQA across all languages after DCO.
}
\vspace{-10pt}
\label{tab:xcsqa-accuracy}
\end{table*}

\begin{table*}[!h]
\centering
\resizebox{\textwidth}{!}{
\begin{tabular}{l|rrrrrrrrrrrrrrrr}
\toprule
& \langname{fr} & \langname{nl} & \langname{es} & \langname{ru} & \langname{ja} & \langname{zh} & \langname{ko} & \langname{vi} & \langname{el} & \langname{hu} & \langname{he} & \langname{tr} & \langname{ca} & \langname{ar} & \langname{uk} & \langname{fa} \\
\midrule
\rowcolor{black!10} \multicolumn{17}{c}{\bf Consistency} \\
\midrule
\modelname{Qwen2.5-7B} & 44.12 & 48.41 & 45.50 & 41.78 & 39.05 & 40.18 & 36.50 & 44.54 & 30.88 & 30.66 & 30.88 & 34.29 & 39.16 & 40.89 & 40.53 & 35.85 \\
+ \methodname & +17.90 & +13.98 & +16.21 & +15.08 & +16.89 & +11.53 & +18.40 & +19.05 & +17.79 & +15.54 & +17.27 & +17.10 & +14.64 & +13.24 & +15.02 & +14.15 \\
\modelname{Qwen2.5-14B} & 45.01 & 50.87 & 47.45 & 42.83 & 41.94 & 42.61 & 41.19 & 46.91 & 35.70 & 32.04 & 38.84 & 37.99 & 41.71 & 41.78 & 42.48 & 40.58 \\
+ \methodname & +17.33 & +14.48 & +14.35 & +16.05 & +16.66 & +9.77 & +16.23 & +16.32 & +18.86 & +16.48 & +14.32 & +15.93 & +13.36 & +15.40 & +16.67 & +14.38 \\
\midrule
\modelname{Gemma3-4B-pt} & 38.26 & 48.78 & 45.63 & 40.57 & 36.99 & 31.78 & 35.72 & 45.23 & 38.16 & 34.85 & 35.91 & 39.77 & 36.88 & 35.51 & 43.75 & 35.92 \\
+ \methodname & +23.36 & +16.47 & +15.47 & +18.24 & +15.90 & +14.32 & +16.87 & +21.56 & +19.32 & +20.07 & +14.65 & +19.58 & +18.49 & +12.63 & +15.81 & +12.18 \\
\modelname{Gemma3-12B-pt} & 41.81 & 51.38 & 47.80 & 42.90 & 40.98 & 34.76 & 40.39 & 47.82 & 41.29 & 39.53 & 40.16 & 42.17 & 39.82 & 38.87 & 45.96 & 40.10 \\
+ \methodname & +23.27 & +15.53 & +15.53 & +17.37 & +15.17 & +15.12 & +15.74 & +20.81 & +19.02 & +18.83 & +13.72 & +19.46 & +17.92 & +11.74 & +15.76 & +11.35 \\
\midrule
\modelname{Qwen3-8B} & 43.21 & 47.24 & 43.21 & 38.43 & 33.25 & 37.43 & 33.89 & 45.29 & 32.92 & 30.50 & 30.26 & 35.43 & 38.58 & 38.86 & 40.70 & 34.29 \\
+ \methodname & +15.21 & +14.19 & +13.47 & +14.05 & +16.92 & +11.13 & +16.15 & +13.71 & +14.91 & +16.06 & +12.46 & +14.27 & +13.44 & +11.47 & +14.45 & +13.91 \\
\modelname{Qwen3-14B} & 42.45 & 48.82 & 44.37 & 37.93 & 35.68 & 38.56 & 36.01 & 43.17 & 36.10 & 31.49 & 33.53 & 37.95 & 40.46 & 39.62 & 40.71 & 35.62 \\
+ \methodname & +16.43 & +14.11 & +15.01 & +17.66 & +20.46 & +12.22 & +16.78 & +17.53 & +15.39 & +18.40 & +14.80 & +14.37 & +15.12 & +13.71 & +18.13 & +18.09 \\
\midrule
\modelname{Aya-Expanse-8B} & 50.81 & 51.38 & 58.03 & 39.84 & 39.72 & 38.32 & 36.77 & 55.08 & 34.74 & 29.12 & 36.97 & 41.80 & 37.02 & 40.11 & 44.69 & 36.43 \\
+ \methodname & +18.33 & +13.44 & +13.16 & +15.42 & +11.08 & +10.74 & +11.72 & +14.93 & +12.77 & +7.25 & +9.40 & +13.45 & +9.82 & +11.05 & +12.31 & +11.69 \\
\midrule
\modelname{Llama3.1-8B} & 44.16 & 51.07 & 47.40 & 41.13 & 41.35 & 37.11 & 36.94 & 45.82 & 35.80 & 36.87 & 40.42 & 37.31 & 40.05 & 38.42 & 41.33 & 38.38 \\
+ \methodname & +18.83 & +13.48 & +14.66 & +15.91 & +12.74 & +16.33 & +15.32 & +19.62 & +19.85 & +16.26 & +10.56 & +17.47 & +19.42 & +11.64 & +17.89 & +11.23 \\
\modelname{Llama3.2-3B} & 43.00 & 47.98 & 43.92 & 38.72 & 37.89 & 32.10 & 34.73 & 44.14 & 34.21 & 32.74 & 38.04 & 33.17 & 39.14 & 37.21 & 38.65 & 35.67 \\
+ \methodname & +18.58 & +15.29 & +14.58 & +15.53 & +15.18 & +15.06 & +12.51 & +18.92 & +16.58 & +18.08 & +10.84 & +17.03 & +15.79 & +12.44 & +17.07 & +12.66 \\
\bottomrule
\end{tabular}
}
\caption{Consistency improvements with English for each language across all models on BMLAMA.}
\vspace{-10pt}
\label{tab:bmlama-consistency}
\end{table*}

\begin{table*}[!h]
\centering
\resizebox{\textwidth}{!}{
\begin{tabular}{l|r|rrrrrrrrrrrrrrrr}
\toprule
 & \langname{en} & \langname{fr} & \langname{nl} & \langname{es} & \langname{ru} & \langname{ja} & \langname{zh} & \langname{ko} & \langname{vi} & \langname{el} & \langname{hu} & \langname{he} & \langname{tr} & \langname{ca} & \langname{ar} & \langname{uk} & \langname{fa} \\
\midrule
\rowcolor{black!10} \multicolumn{18}{c}{\bf Accuracy} \\
\midrule
\modelname{Qwen2.5-7B} & 61.83 & 36.61 & 43.02 & 40.79 & 39.06 & 36.61 & 34.82 & 33.37 & 38.84 & 27.68 & 28.01 & 26.40 & 31.86 & 34.15 & 39.79 & 38.56 & 32.42 \\
+ \methodname & +5.61 & +19.31 & +14.57 & +15.68 & +12.61 & +16.57 & +14.23 & +17.75 & +15.35 & +12.61 & +12.56 & +13.89 & +12.73 & +13.23 & +9.32 & +13.34 & +10.88 \\
\modelname{Qwen2.5-14B} & 62.67 & 36.71 & 46.48 & 44.81 & 42.02 & 40.07 & 37.05 & 40.57 & 41.46 & 32.53 & 28.12 & 35.60 & 34.99 & 37.05 & 42.35 & 41.46 & 36.66 \\
+ \methodname & +6.33 & +21.10 & +12.90 & +12.56 & +13.67 & +15.18 & +11.89 & +14.28 & +15.85 & +13.12 & +15.85 & +13.34 & +12.89 & +14.62 & +11.17 & +15.12 & +13.51 \\
\midrule
\modelname{Gemma3-4B-pt} & 66.07 & 32.42 & 43.86 & 39.68 & 35.21 & 30.30 & 26.34 & 32.48 & 36.83 & 31.81 & 30.69 & 29.91 & 34.60 & 32.42 & 31.58 & 38.90 & 30.08 \\
+ \methodname & +2.16 & +22.55 & +14.73 & +19.58 & +19.31 & +18.81 & +17.63 & +17.19 & +19.14 & +18.02 & +19.59 & +15.74 & +17.97 & +19.76 & +15.13 & +16.57 & +16.01 \\
\modelname{Gemma3-12B-pt} & 68.25 & 37.56 & 49.00 & 43.14 & 38.28 & 36.77 & 30.75 & 38.11 & 39.90 & 38.11 & 35.88 & 37.00 & 37.05 & 36.22 & 36.44 & 41.91 & 36.38 \\
+ \methodname & +1.55 & +21.09 & +13.05 & +17.57 & +17.47 & +17.30 & +16.91 & +16.58 & +19.03 & +18.11 & +18.47 & +14.23 & +19.53 & +17.52 & +13.84 & +18.02 & +12.62 \\
\midrule
\modelname{Qwen3-8B} & 58.37 & 36.83 & 42.08 & 38.06 & 37.17 & 33.15 & 32.25 & 32.70 & 38.50 & 30.36 & 26.17 & 26.23 & 32.37 & 33.82 & 37.22 & 42.69 & 29.52 \\
+ \methodname & +6.90 & +16.69 & +13.11 & +14.28 & +12.10 & +14.67 & +12.95 & +13.17 & +12.34 & +11.55 & +15.79 & +11.94 & +13.05 & +14.51 & +9.43 & +9.65 & +11.33 \\
\modelname{Qwen3-14B} & 58.43 & 34.99 & 44.87 & 40.23 & 38.45 & 36.72 & 33.20 & 33.43 & 37.05 & 34.88 & 28.35 & 31.81 & 34.71 & 37.05 & 39.17 & 42.63 & 34.49 \\
+ \methodname & +8.07 & +18.86 & +12.66 & +13.84 & +13.84 & +16.01 & +16.30 & +14.78 & +16.69 & +17.35 & +16.91 & +11.77 & +14.17 & +14.85 & +12.00 & +11.39 & +13.05 \\
\midrule
\modelname{Aya-Expanse-8B} & 67.02 & 45.31 & 46.82 & 52.73 & 35.21 & 36.22 & 37.28 & 34.04 & 49.55 & 32.87 & 22.32 & 33.37 & 35.83 & 32.76 & 37.61 & 39.79 & 33.15 \\
+ \methodname & +1.43 & +14.68 & +11.55 & +9.44 & +18.31 & +12.27 & +11.16 & +13.90 & +11.00 & +12.11 & +8.54 & +10.16 & +14.67 & +10.32 & +10.55 & +12.05 & +13.78 \\
\midrule
\modelname{Llama3.1-8B} & 61.16 & 35.21 & 46.15 & 41.96 & 38.34 & 37.39 & 28.52 & 31.03 & 38.17 & 31.58 & 32.48 & 36.89 & 32.48 & 36.22 & 33.93 & 40.18 & 32.70 \\
+ \methodname & +7.17 & +23.61 & +14.73 & +17.58 & +17.80 & +14.34 & +22.43 & +18.19 & +21.26 & +19.87 & +17.58 & +9.82 & +18.41 & +20.64 & +12.50 & +18.36 & +14.73 \\
\modelname{Llama3.2-3B} & 61.05 & 35.38 & 43.08 & 38.67 & 34.60 & 32.31 & 24.67 & 26.95 & 35.55 & 29.35 & 28.74 & 35.16 & 27.57 & 35.21 & 32.25 & 36.22 & 28.24 \\
+ \methodname & +6.72 & +21.76 & +16.18 & +17.52 & +18.30 & +17.75 & +19.64 & +17.41 & +21.37 & +15.52 & +18.14 & +9.48 & +17.69 & +16.35 & +14.07 & +18.02 & +15.34 \\
\bottomrule
\end{tabular}
}
\caption{Accuracy of each model on BMLAMA across all languages after DCO.}
\vspace{-10pt}
\label{tab:bmlama-accuracy}
\end{table*}

\section{Full Results of Out-of-Domain Experiments}
\label{sec:full_results_generalizability}
We present the full results on all five out-of-domain tasks in \Cref{tab:generalizability_clc} and \Cref{tab:generalizability_acc}.
The improvement in both evaluation dimensions suggests the significant generalizability of \methodname{}.

\begin{table*}[!h]
\centering
\resizebox{\textwidth}{!}{
\begin{tabular}{l|rrrrrrrrrrrrrr}
\toprule
\textbf{Domain} & \langname{ar} & \langname{de} & \langname{es} & \langname{fr} & \langname{hi} & \langname{id} & \langname{it} & \langname{ja} & \langname{ko} & \langname{pt} & \langname{sw} & \langname{yo} & \langname{zh} & \langname{bn} \\
\midrule
\rowcolor{black!10} \multicolumn{15}{c}{\bf Anatomy} \\
\midrule
Base & 54.08 & 65.20 & 57.86 & 67.79 & 50.76 & 65.17 & 68.50 & 64.77 & 56.56 & 63.31 & 49.45 & 46.78 & 70.92 & 51.68 \\
+ \methodname & +13.10 & +7.02 & +17.56 & +12.04 & +9.30 & +13.69 & +11.42 & +11.12 & +10.49 & +19.53 & +4.81 & +3.20 & +12.00 & +7.91 \\
\midrule
\rowcolor{black!10} \multicolumn{15}{c}{\bf Medical genetics} \\
\midrule
Base & 66.91 & 78.76 & 75.42 & 71.46 & 56.92 & 74.52 & 74.54 & 77.25 & 66.76 & 77.88 & 62.25 & 59.95 & 79.97 & 62.66 \\
+ \methodname & +9.21 & +11.71 & +16.51 & +18.54 & +12.85 & +8.24 & +13.17 & +3.18 & +10.13 & +14.60 & +4.06 & +4.11 & +8.99 & +9.38 \\
\midrule
\rowcolor{black!10} \multicolumn{15}{c}{\bf High school mathematics} \\
\midrule
Base & 70.25 & 71.76 & 71.93 & 68.89 & 68.21 & 72.42 & 71.38 & 67.22 & 70.27 & 67.51 & 59.42 & 58.65 & 66.81 & 64.88 \\
+ \methodname & +8.59 & +8.55 & +14.72 & +13.07 & +10.86 & +13.30 & +11.21 & +12.76 & +9.48 & +16.02 & +8.12 & +4.15 & +14.26 & +12.67 \\
\midrule
\rowcolor{black!10} \multicolumn{15}{c}{\bf College mathematics} \\
\midrule
Base & 61.09 & 72.62 & 68.67 & 69.42 & 63.56 & 64.56 & 73.08 & 65.81 & 61.24 & 69.62 & 50.28 & 50.95 & 69.75 & 58.81 \\
+ \methodname & +13.72 & +7.71 & +17.22 & +13.31 & +8.55 & +14.01 & +10.30 & +8.80 & +10.38 & +13.61 & +12.08 & +7.26 & +13.39 & +8.68 \\
\midrule
\rowcolor{black!10} \multicolumn{15}{c}{\bf High school world history} \\
\midrule
Base & 83.50 & 83.06 & 87.21 & 86.67 & 74.73 & 81.53 & 87.19 & 83.20 & 54.06 & 85.53 & 56.25 & 44.98 & 41.79 & 74.35 \\
+ \methodname & +3.52 & +6.03 & +3.82 & +3.19 & +4.52 & +4.34 & +0.82 & +3.85 & +3.64 & +4.99 & -1.37 & +1.42 & +1.08 & +3.43 \\
\bottomrule
\end{tabular}
}
\caption{Consistency with English under cross-domain settings using \modelname{Qwen2.5-14B}. The model is post-trained with data of `high school microeconomics'.}
\label{tab:generalizability_clc}
\end{table*}

\begin{table*}[!h]
\centering
\resizebox{\textwidth}{!}{
\begin{tabular}{l|r|rrrrrrrrrrrrrr}
\toprule
\textbf{Domain} & \langname{en} &  \langname{ar} & \langname{de} & \langname{es} & \langname{fr} & \langname{hi} & \langname{id} & \langname{it} & \langname{ja} & \langname{ko} & \langname{pt} & \langname{sw} & \langname{yo} & \langname{zh} & \langname{bn} \\
\midrule
\rowcolor{black!10} \multicolumn{16}{c}{\bf Anatomy} \\
\midrule
Base & 68.89 & 43.70 & 46.67 & 45.93 & 53.33 & 34.07 & 50.37 & 51.85 & 57.04 & 44.44 & 43.70 & 32.59 & 31.85 & 72.59 & 40.00 \\
+ \methodname & +1.80 & 0.00 & +5.92 & +5.18 & +6.67 & +2.97 & +8.89 & +7.41 & 0.00 & +5.93 & +18.52 & -2.96 & -4.44 & -2.96 & +1.48 \\
\midrule
\rowcolor{black!10} \multicolumn{16}{c}{\bf Medical genetics} \\
\midrule
Base & 82.00 & 61.00 & 73.00 & 66.00 & 66.00 & 46.00 & 74.00 & 63.00 & 79.00 & 59.00 & 69.00 & 53.00 & 53.00 & 78.00 & 53.00 \\
+ \methodname & +5.43 & +5.00 & +7.00 & +14.00 & +14.00 & +11.00 & +3.00 & +13.00 & -5.00 & +8.00 & +16.00 & +2.00 & -2.00 & +4.00 & +8.00 \\
\midrule
\rowcolor{black!10} \multicolumn{16}{c}{\bf High school mathematics} \\
\midrule
Base & 53.70 & 43.70 & 45.56 & 39.63 & 45.19 & 43.33 & 42.96 & 39.26 & 48.52 & 47.04 & 43.70 & 34.81 & 30.74 & 60.00 & 40.74 \\
+ \methodname & +2.38 & +6.30 & +9.63 & +7.41 & +8.14 & +3.71 & +7.41 & +7.41 & +5.18 & +4.81 & +14.08 & +5.93 & +6.30 & -3.33 & +12.22 \\
\midrule
\rowcolor{black!10} \multicolumn{16}{c}{\bf College mathematics} \\
\midrule
Base & 57.00 & 38.00 & 41.00 & 40.00 & 36.00 & 33.00 & 39.00 & 36.00 & 44.00 & 35.00 & 41.00 & 24.00 & 32.00 & 52.00 & 34.00 \\
+ \methodname & -0.36 & +11.00 & +15.00 & +8.00 & +22.00 & +10.00 & +11.00 & +18.00 & +16.00 & +11.00 & +13.00 & +15.00 & +4.00 & +10.00 & +9.00 \\
\midrule
\rowcolor{black!10} \multicolumn{16}{c}{\bf High school world history} \\
\midrule
Base & 89.45 & 81.43 & 82.28 & 84.81 & 84.39 & 70.46 & 79.32 & 83.97 & 80.59 & 80.17 & 82.70 & 46.84 & 33.33 & 81.86 & 68.78 \\
+ \methodname & +0.55 & +1.27 & -0.85 & 0.00 & -0.85 & +0.43 & +1.69 & -1.69 & +1.27 & -2.95 & -0.42 & -2.96 & +0.85 & +0.42 & +2.53 \\
\bottomrule
\end{tabular}
}
\caption{Accuracy under cross-domain settings using \modelname{Qwen2.5-14B}. The model is post-trained with data of `high school microeconomics'.}
\vspace{-10pt}
\label{tab:generalizability_acc}
\end{table*}

\end{document}